\newcommand{\calens}{ID-calibrated ensembles}
\newcommand{\Calens}{ID-calibrated ensembles}
\newcommand{\natshifts}{natural shifts}
\newcommand{\natshift}{natural shift}
\newcommand{\adv}{anticorrelated spurious}
\newcommand{\advshifts}{anticorrelated spurious shifts}
\newcommand{\advshift}{anticorrelated spurious shift}
\newcommand{\Advshifts}{Anticorrelated spurious shifts}
\newcommand{\numidnat}{9}
\newcommand{\numnat}{11}
\newcommand{\numadv}{3}
\newcommand{\numtotal}{14}
\newcommand{\stdaccid}{88.6}
\newcommand{\stdaccood}{64.3}
\newcommand{\stdaccidnatural}{88.7}
\newcommand{\stdaccoodnatural}{65.2}
\newcommand{\robaccid}{86.9}
\newcommand{\robaccood}{74.6}
\newcommand{\robaccidnatural}{86.8}
\newcommand{\robaccoodnatural}{72.3}
\newcommand{\calaccid}{90.0}
\newcommand{\calaccood}{74.7}
\newcommand{\calaccidnatural}{90.3}
\newcommand{\calaccoodnatural}{74.5}
\newcommand{\tunedaccood}{72.1}
\newcommand{\naiveaccid}{89.4}
\newcommand{\naiveaccood}{73.1}
\newcommand{\stdeceid}{1.6}
\newcommand{\stdeceood}{11.3}
\newcommand{\robeceid}{2.3}
\newcommand{\robeceood}{6.8}
\newcommand{\robrobaccidSeven}{89.7}
\newcommand{\robrobaccoodSeven}{76.2}
\newcommand{\stdstdaccidSeven}{90.7}
\newcommand{\stdstdaccoodSeven}{68.8}
\newcommand{\calaccidSeven}{91.8}
\newcommand{\calaccoodSeven}{76.5}
\newcommand{\conf}{\mbox{conf}_\mathsf{id}}
\newcommand{\jens}{j_\mathsf{ens}}
\newcommand{\jstd}{j_\mathsf{std}}
\newcommand{\jrob}{j_\mathsf{rob}}
\newcommand{\pred}{\mbox{pred}}
\newcommand{\Err}{\mbox{Err}}
\newcommand{\Errid}{\mbox{Err}_\mathsf{id}}
\newcommand{\Errood}{\mbox{Err}_\mathsf{ood}}
\newcommand{\xval}{x^\mathsf{val}}
\newcommand{\yval}{y^\mathsf{val}}
\newcommand{\fstd}{f_\mathsf{std}}
\newcommand{\frob}{f_\mathsf{rob}}
\newcommand{\fstdbar}{\overline{f}_\mathsf{std}}
\newcommand{\frobbar}{\overline{f}_\mathsf{rob}}
\newcommand{\fens}{f_\mathsf{ens}}
\newcommand{\Tstd}{T_\mathsf{std}}
\newcommand{\Trob}{T_\mathsf{rob}}
\newcommand{\nval}{n_\mathsf{val}}
\newcommand{\Pid}{P_\mathsf{id}}
\newcommand{\Pood}{P_\mathsf{ood}}
\newcommand{\Padv}{P_\mathsf{adv}}
\renewcommand{\hat}{\widehat}
\newtheorem{assumption}{Assumption}[section]
\newtheorem{proposition}{Proposition}[section]
\newtheorem{lemma}{Lemma}[section]
\newtheorem{definition}{Definition}[section]
\newtheorem{corollary}{Corollary}[section]
\newtheorem*{remark*}{Remark}
\newtheorem*{observation*}{Observation}
\numberwithin{equation}{section}
\newcommand{\E}{\mathop{{}\mathbb{E}}}
\newcommand{\R}{\mathbb{R}}
\newcommand{\cX}{\mathcal{X}}
\newcommand{\cY}{\mathcal{Y}}
\newcommand{\cZ}{\mathcal{Z}}
\newcommand{\argmin}{\arg \min}
\newcommand{\argmax}{\arg \max}
\newcommand{\Gnorm}[1]{{\left\vert\kern-0.25ex\left\vert\kern-0.25ex\left\vert #1 
		\right\vert\kern-0.25ex\right\vert\kern-0.25ex\right\vert}}
\newcommand{\gnorm}[1]{{\vert\kern-0.25ex\vert\kern-0.25ex\vert #1 
		\vert\kern-0.25ex\vert\kern-0.25ex\vert}}
\newcommand{\softmax}{\mbox{softmax}}
\def\shownotes{0}
\newcommand{\authnote}[2]{[#1: #2]}
\newcommand{\authnote}[2]{}
\newcommand{\ar}[1]{{\color{orange}\authnote{AR}{#1}}}
\newcommand{\pl}[1]{{\color{red}\authnote{PL}{#1}}}
\newcommand{\ak}[1]{{\color{green}\authnote{AK}{#1}}}
\newcommand{\tnote}[1]{{\color{blue}\authnote{TM}{#1}}}
\title{Calibrated ensembles can mitigate accuracy tradeoffs under distribution shift}
\author[1]{\href{mailto:<ananya@cs.stanford.edu>?Subject=Your UAI 2022 paper}{Ananya Kumar}{}}
\author[1]{Tengyu Ma}
\author[1]{Percy Liang}
\author[2]{Aditi Raghunathan}
\affil[1]{%
    Computer Science Dept.\\
    Stanford University\\
    Stanford, California, USA
}
\affil[2]{%
    Computer Science Dept.\\
    Carnegie Mellon University\\
    Pittsburgh, Pennsylvania, USA
} 
\begin{document}
\maketitle

\begin{abstract}
  We often see undesirable tradeoffs in robust machine learning where out-of-distribution (OOD) accuracy is at odds with in-distribution (ID) accuracy:
a robust classifier obtained via specialized techniques such as removing spurious features often has better OOD but worse ID accuracy compared to a standard classifier trained via ERM.
In this paper, we find that \calens{}---where we simply ensemble the standard and robust models after calibrating on only ID data---outperforms prior state-of-the-art (based on self-training) on both ID and OOD accuracy.
On eleven natural distribution shift datasets, \calens{} obtain the best of both worlds: strong ID accuracy \emph{and} OOD accuracy.
We analyze this method in stylized settings, and identify two important conditions for ensembles to perform well both ID and OOD: (1) we need to calibrate the standard and robust models (on ID data, because OOD data is unavailable), (2) OOD has no anticorrelated spurious features.

\end{abstract}

\section{Introduction}
\label{sec:intro}

Machine learning models suffer large drops in accuracy out-of-distribution (OOD) where the test distribution is different from the training distribution.
For example, models trained on medical data from a few hospitals work poorly when deployed broadly~\citep{zech2018radio, albadawy2018tumor}. Similarly, when predicting poverty from satellite imagery, models trained on data from a few countries work poorly on new countries, particularly those where labels are scarce due to resource constraints~\citep{jean2016combining}.
There has been a lot of research interest in tackling this robustness problem under various settings such as robustness to spurious correlations~\citep{heinze2017conditional, sagawa2020group}, domain generalization~\citep{arjovsky2019invariant, sun2016deep}, demographic shifts~\citep{hashimoto2018repeated, duchi2019distributionally} among others.

\begin{figure*}[th]
    \centering
    \includegraphics[width=\textwidth]{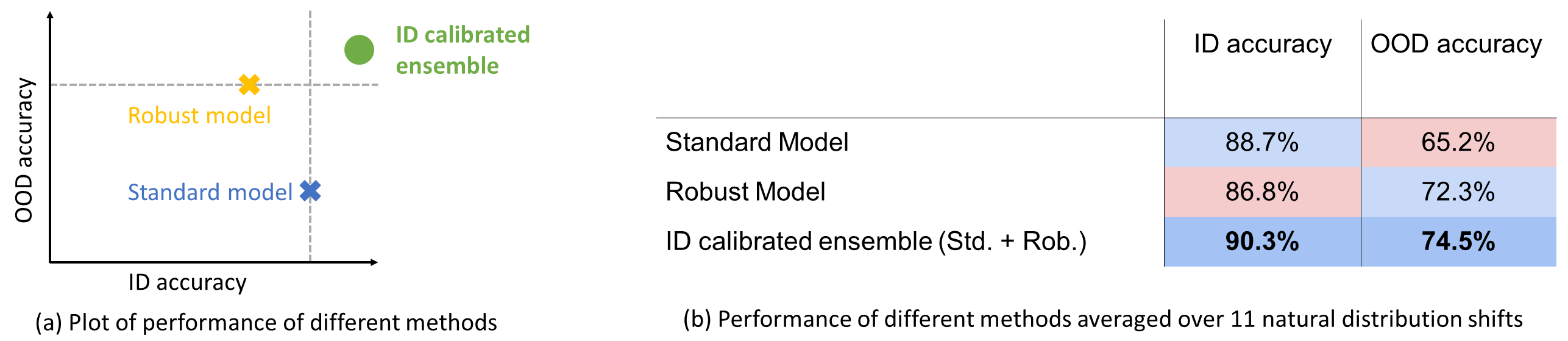}
    \caption{
      In many settings, we have a standard model that performs better in-distribution, and a robust model that performs better out-of-distribution.
      Across \numnat{} natural distribution shifts, ID-calibrated ensembles get the best of both worlds: the strong ID accuracy of the standard model and OOD accuracy of the robust model.
      We analyze its strengths and limitations in Section~\ref{sec:analysis}---as predicted by our analysis, ID calibrated ensembles do not perform as well on adversarially synthesized shifts with ``anticorrelated'' spurious features.
      We show full experimental results and ablations in Section~\ref{sec:experiments}.
    }
    \label{fig:calibration-figs}
\end{figure*}
\pl{I think this is too strong - the best robustness interventions these days - pre-training - improve both}
\ak{Agreed that it's too strong, so I've softened. But note that pretraining typically does have the same tradeoffs, e.g., in Lisa's prefix tuning, or LP-FT. We aren't comparing whether pretraining improves over no pretraining. We're comparing the best method for ID vs. the best method for OOD, so that'd be pretraining with LP vs. pretraining with FT. And we include such experiments in this paper!}
Across many of these settings, an \emph{unfortunate tradeoff} arises~\citep{tsipras2019robustness,xie2021innout,radford2021clip,li2021prefix,kumar2022finetuning}: robustness interventions, such as removing spurious features or lightweight fine-tuning, typically \emph{improve the out-of-distribution (OOD) accuracy but cause a drop in the in-distribution (ID) accuracy} on new test points from the original distribution. 
This tradeoff is a major hurdle in using robust models---in practice most inputs are likely to be ID, so it is unsatisfactory to use a robust model that performs less accurately on these majority ID points.  On the other hand, standard models (trained without robustness interventions) can fail in the presence of even small shifts, and it can be dangerous to use a standard model even if OOD points are rare. 
In this work, we ask: \emph{is there a general strategy to harness the strengths of both the standard and robust model to achieve high accuracy both ID and OOD, without using OOD data?}
\ar{I wonder if we should also say something about whether we even expect to mitigate this tradeoff at all (for e.g. the robust features stuff in Tsipras et al and many other papers says it's fundamental) ... and also it seems like we are obviously missing the seld training references.. perhaps we can kill two birds by citing self training works that show that it is possible to mitigate the tradeoff but they require a large amount of unlabeled data. And then add the qualifier ``without any additional data from the target domain'' in the question? It's certainly less cleaner than what you have, so think about it} 
\tnote{i think a middle option is to cite Tsipras et al to demonstrate that best of both worlds should not be taken for grant, but not mention the self-training works to make the narrative simpler}
\ak{These are good idea, and I'm thinking about how to work them in. The current phrasing does ask whether we can harness the strengths of both (doesn't take it for granted). Citing Tsipras et al is a good idea.}

We find that \calens{}, a simple approach of first calibrating the standard and robust models on only ID data and then ensembling them, outperforms prior state-of-the-art both ID and OOD.
As illustrated in Figure~\ref{fig:calibration-figs}, across \numnat{} natural distribution shift datasets (e.g. geographical shift, style shift, subpopulation shift), \calens{} get the \emph{best of both worlds}: better ID and OOD accuracies than both the standard and robust models.
Averaged across these datasets, \calens{} achieve an ID accuracy of \calaccidnatural{}\% (vs. \stdaccidnatural{}\% for the standard model and \robaccidnatural{}\% for the robust model) and OOD accuracy of \calaccoodnatural{}\% (vs. \stdaccoodnatural{}\% for the standard model and \robaccoodnatural{}\% for the robust model).

To build intuitions for when and why ID-calibrated ensembles can get the best of the standard and robust models, we analyze a toy setting where these models produce independent signals for the label.
\pl{why would we expect the signals to be independent? I wouldn't expect them to be}
\ak{Yeah, they probably aren't (although works such as simplicity bias could motivate this---e.g., ERM often exclusively uses spurious features, robust methods project out spurious features and use others). I think it would be good to analyze more general settings in the future!}
The first step of our method is to calibrate the standard and robust models on ID data.
We show that after this calibration step, the ensembling strategy for the best ID performance is to simply add \pl{doesn't always type check, say combine?}\ak{I'm wondering if combine sounds vague. Also, why doesn't add typecheck?} the predictions of the two models (Proposition~\ref{prop:calibration-ensemble-optimal}).
In particular, this approach (\calens{}) outperforms both the standard and robust models on ID test examples.

When can \calens{} provide benefits OOD even though it does not use any OOD data?
In many natural distribution shifts, standard models pick up on predictive signals in the training data that are absent or suppressed under distribution shift---in these cases, we show that \calens{} obtain the best of both the standard and robust models OOD.
However, when spurious features become anticorrelated OOD (as is common when the distribution shift is adversarially synthesized), we show that the ensemble's OOD accuracy is in between the standard and robust models.
We empirically validate this on three adversarially synthesized shifts~\citep{sagawa2020group,jones2021selective} where the spurious signals are anticorrelated OOD.

We find that \calens{} outperform prior approaches based on self-training~\citep{carmon2019unlabeled,uesato2019are,xie2021innout}, despite not using any additional unlabeled data.
Finally, we compare \calens{} to a number of other ensembling strategies (for example, tuning the weights of the ensemble on ID validation data) and find that they do not work as well as \calens{}.
\pl{other ensembles don't calibrate? maybe make that clearer?}
\ak{doesn't make a difference for tuned ensembles (formally depends on the variant, whether we do logits or probs, but in practice doesn't matter)}

To summarize, our main contributions are:
\begin{enumerate}
  \item We revisit the classic idea of ensembling and propose a simple, general, and effective method (\calens{}) to mitigate ID-OOD accuracy tradeoffs (without using OOD data).
  \item \calens{} outperform prior approaches based on self-training, despite not using any additional unlabeled data.
% and other specialized approaches that only work for specific kinds of shifts and tradeoffs. 
  \item We find that \calens{} eliminate tradeoffs under a variety of natural distribution shifts, but can fail when there are adversarially synthesized shifts.
\end{enumerate}

\ar{Overall, the intro looks good to me. Similarly to the comment on the abstract, I wonder if we should lead with the experimental result that CEs match or beat self-training. I feel the method/analysis is not that exciting on its own, it's exciting because it works so well. So starting with that observation seems more compelling to me, though it breaks the usual theory -> experimetns flow of a standard paper}
\ak{Changed the abstract to do this, will think about the intro (I like that it's clean now, but agreed that would be good to talk about self-training since that's the most interesting result)}

\section{Setup}
\label{sec:setup}

Consider a $K$-class classification task, where the goal is to predict labels $y \in [K]$ corresponding to inputs $x \in \cX$.

\textbf{Models.}
A model $f: \cX \to \R^K$ takes an input $x \in \cX$ and outputs a score $f(x) \in \R^K$ where $f(x)_i$ can be interpreted as the model's ``confidence'' that the label $y$ is $i$.
The model outputs the label $\pred(f(x)) = \argmax_i f(x)_i$. 
The confidence scores can be normalized to sum to $1$ (and interpreted as probabilities) using the softmax function, $\softmax(f(x))_i = \frac{\exp(f(x)_i)}{\sum_{j=1}^K \exp(f(x)_j)}$ for $i \in [K]$.

\textbf{Distributions and error.}
Let $\Pid$ and $\Pood$ denote the underlying distribution of $(x, y)$ pairs in-distribution (ID) and out-of-distribution (OOD), respectively.
We evaluate a model $f$ on the fraction of times it makes a wrong prediction on $\Pid$ and $\Pood$: $\Errid(f) = \E_{x, y \sim \Pid}[ \pred(f(x)) \neq y]$ and $\Errood(f) = \E_{x, y \sim \Pood}[ \pred(f(x)) \neq y]$.
% \ar{I think we can just define $\Errid$ and $\Errood$ directly... feels easier that going via the abstract $\Err(P, \cdot)$ and then also compressing the notation}
% \ak{done}
% \ar{How exactly are the standard and robust models defined? I think standard should be defined as ERM model and robust model is any model that performs better than the ERM model OOD? Right now, it's unclear and confusing}
% \ak{added}

\textbf{Standard and robust models.}
A standard model $\fstd$ is trained via empirical risk minimization (ERM) where we minimize some loss on ID training data.
$\fstd$ often might rely on spurious correlations between the image and label such as image background or occurence of certain words that are not necessarily predictive OOD.\ak{TODO: add cites}
% Hence standard models often perform poorly OOD.
% In order to improve OOD performance, the training process needs to be changed (robustness interventions) to discourage models from relying on ID-specific spurious features.
% We call such models $\frob$, where the exact robustness intervention depends on the task. 
In order to improve OOD performance, a robust model $\frob$ is trained via a modified training procedure (robustness interventions) to discourage models from relying on ID-specific spurious features.
We have the following \emph{tradeoff} between $\fstd$ and $\frob$.
\begin{align}
\Errid(\fstd) \leq \Errid(\frob); ~~\Errood(\frob) \leq \Errood(\fstd). 
\end{align}
\pl{this is assuming infinite data...I think in general we need to be clear about this that we're not thinking about generalization?}
\ak{Trying to understand this better---are you saying the standard and robust models might not satisfy this property with finite data? E.g., in In-N-Out if we have very little data, robust model can do better since it uses fewer features? We're just taking this tradeoff as a given though here. Agreed that we should say we don't really look at finite samples in the analysis.}
The precise robustness intervention depends on the task---in Section~\ref{sec:analysis} we model the relationship between $\fstd$ and $\frob$ in a stylized setting amenable for analysis, and in Section~\ref{sec:datasets} we describe what $\fstd$ and $\frob$ are in our real datasets.
% We call such models $\frob$, where the exact robustness intervention depends on the task. 
\ar{Is there an easy example to give here?InNOut and LP/FT both seem a bit difficult to explain intuitively?}
\ak{Thinking---maybe I can just say by projecting out spurious features}

\textbf{Best of both worlds.} Our goal is to get the best of both worlds---a classifier $\fens$ that achieves better ID accuracy than the standard model, and better OOD accuracy than the robust model.
\begin{align}
\Errid(\fens) \leq \Errid(\fstd); ~~\Errood(\fens) \leq \Errood(\frob). 
\end{align}

% \begin{equation}
% 	\Err(f, P) = \E\limits_{x, y \sim P}[ \pred(f(x)) \neq y]
% \end{equation}
% where $\Errid(f) = \Err(f, \Pid)$ and $\Errood(f) = \Err(f, \Pood)$.
% \Err(f, P) = \E\limits_{x, y \sim \Pid}[ \pred(f(x)) \neq y]\mbox{, and }\Errood(f) = \E\limits_{x, y \sim \Pood}[ \pred(f(x)) \neq y],
% , or fraction of misclassifications on \ar{samples from} a distribution. Formally, for a distribution $P$, we have $\Err(P, \cdot) = \E\limits_{x, y \sim P}[ \pred(f(x)) \neq y]$. Let $\Pid$ and $\Pood$ denote the underlying distribution of $(x, y)$ pairs in-distribution (ID) and out-of-distribution (OOD), respectively.
% In this work, we evaluate models on $\Err(\Pid, \cdot)$ and $\Err(\Pood, \cdot)$ and $\Pood$, denoted by $\Errid$ and $\Errood$ respectively. We measure $\Errid$ and $\Errood$ on held-out test sets drawn from $\Pid$ and $\Pood$ respectively. 
% 
% We have a standard model $\fstd$ and a robust model $\frob$, with a tradeoff: the standard model typically does better ID while the robust model does better OOD: $\Errid(\fstd) > \Errid(\frob)$ but $\Errood(\fstd) < \Errood(\frob)$---our goal is to get the best of both and produce a model that does well both ID and OOD. \ar{Make precise..? Setup should be precise - this sounds like an intro; Either remove this whole part and just have the ID and OOD errors without going into standard and robust models, or make everything more concrete by grounding in terms of ERM}

\textbf{ID validation data.}
% We have training data from $\Pid$, $\{(\xtrain_i, \ytrain_i)\}_{i=1}^{\ntrain} \sim \Pid$.
% \ak{@Aditi, I know you suggested having training data, but I thought about it more and commented this out because we don't use the training data anywhere. Is that OK?}
To get the best of both worlds, we only allow access to ID validation data, $\{(\xval_i, \yval_i)\}_{i=1}^{\nval} \sim \Pid$, for tuning hyperparameters.
Following~\citet{xie2021innout,koh2021wilds,gulrajani2020search} we do \emph{not} use any OOD validation data.
% ---this captures the setting where it is difficult to predict what kind of distribution shifts occur when a model is deployed. 
\ak{Do we need to explain why we don't have OOD data?}
% \ar{Be consistent across period or colon after titles. I think currently it reflects whether you wrote something or i did :) }
% \ak{Changed all to periods :)}
% \ar{I think active voice would be stronger... in this work, we only allow access to ID validation set and maybe take the chance to say that most other works assume some access to OOD information---either validation set or unlabeled data?}
% \ak{Changed to active voice! I agree with you that not using OOD data is a strength of our work, but I want to be careful that we don't confuse the reader, and be mindful there are works that only use ID data like In-N-Out. Any suggestions on what to add here?}

\pl{but now we are in the finite sample regime?}
\ak{I think sample complexity isn't too critical here (but could be wrong), do you think we should explicitly spell out that we have finite samples / does that add value to the reader?}
\pl{I think it'd be clarifying to specify what fstd and frob and fens can depend on...because we don't talk about the training set}
\ak{Ok, will think about how to frame this. We define things in the analysis and experiments properly, I think our experiments are too broad to easily capture here. But I agree that it would be good to, let me know if you have thoughts!}

% In addition, we have a validation set $\{(\xval_i, \yval_i)\}_{i=1}^{\nval} \sim \Pid$ that can be used to tune hyperparameters.  Note that the validation set is exclusively from $\Pid$ since OOD data is typically unavailable.
% and we do \emph{not} use any OOD validation data---this captures the practical setting where we cannot accurately predict when and and what kind of distribution shifts occur when a model is deployed. 

\section{Methods}
\label{sec:methods}
% \ar{I feel this section should go before intuition and analysis}
% \ak{Done}

% \begin{algorithm*}[tbp]
% 	\caption{\calens{}}	\label{alg:calens}
% 	\begin{algorithmic}[1]
% 		\small
% 				\Require  in-distribution validation data $\{(\xval_i, \yval_i)\}_{i=1}^{\nval} \sim \Pid$, 
% 		\Statex	\hspace{0.7cm} standard and robust models $\fstd, \frob : \cX \to \R^K$
% \State Calibrate $\fstd$ on ID data: $\Tstd = \argmin_T \frac{1}{\nval} \sum_{i=1}^{\nval} l\Big(\frac{\fstd(\xval_i)}{T}, \yval_i\Big)$ \;
% \State Calibrate $\frob$ on ID data: $\Trob = \argmin_T \frac{1}{\nval} \sum_{i=1}^{\nval} l\Big(\frac{\frob(\xval_i)}{T}, \yval_i\Big)$ \;
% \State Return $\fens$ where $\fens(x) = \fstd(x) / \Tstd + \frob(x) / \Trob$
% 	\end{algorithmic}
% \end{algorithm*}

\begin{algorithm}[tbp]
	\caption{\calens{}}	\label{alg:calens}
	\begin{algorithmic}[1]
		\small
				\Require  in-distribution validation data $\{(\xval_i, \yval_i)\}_{i=1}^{\nval} \sim \Pid$, 
		\Statex	\hspace{0.7cm} standard and robust models $\fstd, \frob : \cX \to \R^K$
\State $\fstdbar$ = Calibrate $\fstd$ on in-distribution (ID) data \;
\State $\frobbar$ = Calibrate $\frob$ on in-distribution (ID) data \;
\State Return $\fens(x) = \log\big(\softmax(\fstdbar(x)) + \softmax(\frobbar(x))\big)$
	\end{algorithmic}
\end{algorithm}

\textbf{Proposed method: \calens{}.}
Given a standard model $\fstd$ and robust model $\frob$, we first calibrate each model on the \emph{in-distribution} validation data, and then add up their predictions (Algorithm~\ref{alg:calens}).
In our experiments, we calibrate using a variant~\citep{luo2020privacy} of temperature scaling~\citep{guo2017calibration} that was found to work better (Appendix E.2 of ~\citep{luo2020privacy}).

Neural networks are often overconfident in their predictions~\citep{guo2017calibration}, so the aim of the calibration step is to match up the models' confidences with their accuracies.
% The average accuracy of a model $f$ is simply the proportion of ID validation examples $f$ gets correct.
% % This does not depend on the temperature $T$, because changing the temperature does not change the model's prediction (the $\argmax$).
% \begin{align}
% \acc(f) = \frac{1}{\nval} \sum_{i=1}^{\nval} \mathbbm(\argmax_j f(\xval_i)_j = \yval_i)
% \end{align}
The average confidence, over the ID validation set, of a model $f$ scaled by a temperature $T$ is given by the average of the model's probability for its prediction (the prediction is the $\argmax$ and so the probability is the $\max$).
\begin{align}
\conf(f, T) = \frac{1}{\nval} \sum_{i=1}^{\nval} \max_j \softmax\Big(\frac{f(\xval_i)}{T}\Big)_j
\end{align}
To calibrate, we choose $\Tstd$ and $\Trob$ such that the standard and robust models' confidences match up with their accuracies.
We implement this with binary search, which works since the confidence increases when $T$ decreases.
\begin{align}
\label{eqn:tstd_trob_eqn}
\conf(\fstd, \Tstd) &\approx 1-\Errid(\fstd) \mbox{ and, } \\
\conf(\frob, \Trob) &\approx 1 - \Errid(\frob)
\end{align}

After calibration, we simply ensemble the two models by averaging the probabilities\footnote{The purpose of the $\log$ is to convert back to logit space for $\fens$. The equation looks like a sum and doesn't have a $\frac{1}{2}$, but is equivalent to averaging the probabilities after applying $\softmax$ because $\softmax$ normalizes into probabilities. While it might seem a bit strange to average probabilities, we found this to be a bit more reliable at mitigating tradeoffs than adding the logits (multiplying the probabilities) OOD---see Tables~\ref{tab:ood_results} and ~\ref{tab:ood_tuned}.} that they predict~\citep{lakshminarayanan2017simple}.
\ak{Note: adding and averaging are the same, since we convert back to logit space, but I don't want to confuse potential readers with that.}
\pl{I would have expected geometric average...why does this make sense?}
\ak{That's a great question! Not entirely sure why it makes sense. Prior work does this, and I compared with a few other things (e.g., adding logits), and this seems to work better. I think geometric average (basically multiplying probabilities) is similar to adding logits, since logits are in log space, but will check this intuition!}
% \begin{equation}
% 	\fens(x) = \frac{1}{2}\Big( \softmax\Big(\frac{\fstd(x)}{\Tstd}\Big) + \softmax\Big(\frac{\frob(x)}{\Trob}\Big)\Big),
% \end{equation}
\begin{equation}
	\fens(x) = \log\Big(\softmax\Big(\frac{\fstd(x)}{\Tstd}\Big) + \softmax\Big(\frac{\frob(x)}{\Trob}\Big)\Big),
\end{equation}
where the predicted label is $\pred(\fens(x)) = \argmax_y \fens(x)_y$, and the predicted probabilities are $\softmax(\fens(x))$.
% \ar{Should mention this is our proposed method, and perhaps acknowledge in intro that this is very intuitive but we didn't find any reference to a paper that uses calibrated ensembles}
% \ak{added proposed method, will check intro}

% The goal of temperature scaling is to adjust each model's confidence on the in-distribution validation data.

\paragraph{Ablations.}
In Section~\ref{sec:experiments} we ablate each component of the method, for example the calibration step, way of combining the models, and we compare to (calibrated) ensembles of two standard models, or of two robust models.

\section{Intuitions and analysis}
\label{sec:analysis}

In this section, we build basic intuitions for when and why \calens{} can get the best of both worlds (good ID accuracy of $\fstd$ and OOD accuracy of $\frob$), even without using any OOD data.
We first define a stylized setting, and then analyze the ID performance in Section~\ref{sec:analysis_id} and OOD performance in Section~\ref{sec:analysis_ood}.
While the analysis is stylized, the key strength of \calens{} is the strong \emph{empirical} performance on a wide range of realistic datasets, robustness interventions, and modalities, in Section~\ref{sec:experiments}.
% The key selling point is the strong empirical performance in Section~\ref{sec:experiments}.
\ak{Maybe say analyzing ensembling is a big open problem in theoretical machine learning, which is why we need to resort to such a stylized setting. Distribution shifts only increase this problem.}
% Our goal is to get the best of both worlds (good ID accuracy of $\fstd$ and OOD accuracy of $\frob$) using only ID validation data.
% Here, we consider a stylized setting to build up to a principled approach.
% We analyze the ID accuracy in Section~\ref{sec:analysis_id} and OOD accuracy in Section~\ref{sec:analysis_ood}.
% We describe our final method in Section~\ref{sec:methods}.
% Standard models are trained with empirical risk minimization on training data, while robust models are trained with a robustness intervention such as projecting out spurious features [cites], linear probing a pretrained model [cites], zero-shot language prompting [cites], and distributionally robust optimization [cites].
% and evaluate on real datasets in Section~\ref{sec:experiments}.

% we assume $\fstd$ relies on spurious features of the input (that) 
% Recall that the goal of this work is to combine the strengths of a standard model $\fstd$ (with lower ID error) and a robust $\frob$ (with lower OOD error) in order to achieve the best of both worlds---low ID and low OOD error. To do so, we need to understand how $\fstd$ and $\frob$ differ.
% To get the best of $\fstd$ and $\frob$, we need to understand the relationship between $\fstd$ and $\frob$.
\textbf{Conditional independence.}
The literature on simplicity bias shows that standard models trained via empirical risk minimization (ERM) often exclusively rely on simple spurious patterns, whereas robust models are trained to avoid these patterns.
Motivated by this, we assume inputs have some robust features (that are predictive both ID and OOD) and some spurious features (that are only predictive ID).
$\fstd$ relies on the spurious features while $\frob$ relies on the robust features, both of which provide independent signals on the label.

\begin{assumption}
  We assume that $\frob$ and $\fstd$ have conditionally independent outputs with respect to $\Pid$ and $\Pood$, that is,
\begin{equation}
\frob(x) \perp \fstd(x) \mid y \quad \mbox{when $(x, y) \sim P$ for $P \in \{\Pid, \Pood\}$}
\end{equation}
\end{assumption}

\ar{Connect with the previous...maybe start with this line and then say ``in fact, it's wearker...}
\ak{Sorry, didn't get this suggestion, are you suggesting adding ``in fact'' to the start of the paragraph?}
\textbf{Connection with prior assumptions.}
Our assumption that the \emph{model outputs} are conditionally independent is weaker than assumptions in prior conceptual models of distribution shifts~\cite{chen2020selftraining,sagawa2020overparameterization,nagarajan2020understanding} where robust and spurious features are disjoint parts of the input, each generated independently based on the label. In our setting, the features can be complicated functions of the inputs.
% (that are conditionally independent given the label).

\textbf{Ensemble.} The ensemble $\fens$ simply adds up the predictions of the standard model $\fstd$ and robust model $\frob$. This is slightly different from Section~\ref{sec:methods}, where we add up the probabilities of the model instead of the predictions/logits, but is more amenable to analysis.
\begin{equation}
    \label{eqn:ensemble_dfn}
    \fens(x) = \fstd(x) + \frob(x)
\end{equation}
\pl{given that we average in 3.3, why don't we do that here instead of sum to make things consistent?}
\ak{In the algorithm box we take the average of the probabilities, and it seems strange to add them up or they won't be probabilities. Here we add the logits, and it's necessary for the theory. I think it would be great to analyze these nuances more thoroughly in the future!}

% \ar{motivate why label distribution is important...this is not something people usually think about and is also not defined in the setup?}
% \ak{Saw this comment, trying to think about how to movivate it in a succinct way}
% \ak{update: discussed on slack, and edited accordingly}
\textbf{Class-balanced.} For simplicity of exposition, we assume the class-balanced setting where every label $P(Y=y)$ is equally likely. Formally, we say $P$ is class-balanced if $P(Y=y) = 1/K$ for all $y \in [K]$. We analyze the general setting in Appendix~\ref{app:analysis_appendix}.
\pl{we want to say this for both $\Pid$ and $\Pood$}\ak{We actually also need it for components of $\Pood$. So here I'm just defining what it means for arbitrary $P$, and then the statements of the results invoke the definition, e.g., assume $\Pid$ is class-balanced. Is that ok?}

\subsection{ID performance of ensembles}
\label{sec:analysis_id}

In this section, we show that if $\fstd$ and $\frob$ are \emph{calibrated} with respect to $\Pid$, then the ensemble $\fens$ is the best way to combine their predictions.
Since we have access to validation data from $\Pid$, the first step of our method (Section~\ref{sec:methods}) is to calibrate $\fstd$ and $\frob$ ID.
\pl{this is very confusing since you have mutation;
$\fstd$ and $\frob$ are given as input to your algorithm,
so you should compute the temperature scaling $T$;
but then it's not $\fstd$ that's calibrated, but rather $\fstd/T_{std}$...
}\ak{will think about how to improve this}
We conclude the section by giving intuition for why this calibration step can be particularly important for deep neural networks.
% We first examine the ID performance of the ensemble $\fens$, and in Section~\ref{sec:analysis_ood} we examine the OOD performance.
% We show that if $\fstd$ and $\frob$ are \emph{calibrated} with respect to $\Pid$, then the best way to combine them is to ensemble them: to add up their pedictions and output the label with highest confidence.

Intuitively, calibration means that the probability that a model outputs for an event reflects the true frequency of that event: if a model says 1,000 patients have the flu with probability 0.1, approximately 100 of them should indeed have the flu.
Formally, we look at joint calibration~\citep{murphy1973vector, brocker2009decomposition} where a model $f$ is calibrated with respect to a distribution $P$ if for all $x \in \cX, y \in [K]$:
\begin{equation}
P(y \mid f(x)) = \softmax(f(x))_y
% P(Y = y \mid f(X) = f(x)) = \softmax(f(x))_y
\end{equation}

The following proposition says that if $\fstd$ and $\frob$ are calibrated on $\Pid$, then $\fens$ has lower error on $\Pid$ than any other way of combining the two models---this also implies that $\fens$ gets higher accuracy than $\fstd$ and $\frob$. 
\newcommand{\calibrationEnsembleOptimalText}{
Suppose that $\fstd$ and $\frob$ are calibrated with respect to $\Pid$, and that $\Pid$ is class-balanced.
% Let $\fens(x) = \frob(x) + \fstd(x)$.
Let $h : \R^K \times \R^K \to \R^K$ be an arbitrary function that combines the standard and robust model's predictions, and let $f_h$ be the resulting classifier: $f_h(x) = h(\fstd(x), \frob(x))$.
The ensemble is better than any such combination classifier $f_h$: $\Errid(\fens) \leq \Errid(f_h)$.
}
\begin{proposition}
\label{prop:calibration-ensemble-optimal}
\calibrationEnsembleOptimalText{}
\end{proposition}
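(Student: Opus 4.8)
The plan is to recognize $\fens$ as the Bayes-optimal classifier among all decision rules that see $x$ only through the pair $(\fstd(x), \frob(x))$, and then verify this optimality in closed form using the conditional-independence assumption, class balance, and calibration. First I would reduce the statement to a pointwise claim about the posterior. Any combination classifier predicts $\pred(f_h(x)) = \argmax_y h(\fstd(x), \frob(x))_y$, which is a measurable function of the pair $(\fstd(x), \frob(x))$ alone; conversely, every measurable rule $g(\fstd(x), \frob(x)) \in [K]$ is realized by some $h$ (e.g.\ a one-hot output). Hence $\min_h \Errid(f_h)$ equals the Bayes error of predicting $y$ from $(\fstd(x), \frob(x))$, which is attained by the maximum-a-posteriori rule $\argmax_y \Pid(y \mid \fstd(x), \frob(x))$. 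It therefore suffices to show that $\pred(\fens(x))$ agrees with this MAP rule $\Pid$-almost everywhere.

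Next I would compute the posterior in closed form. Abbreviating $a = \fstd(x)$ and $b = \frob(x)$, Bayes' rule gives $\Pid(y \mid a, b) \propto \Pid(a, b \mid y)\,\Pid(y)$ (proportionality in $y$). The conditional-independence assumption factorizes $\Pid(a, b \mid y) = \Pid(a \mid y)\,\Pid(b \mid y)$, and class balance makes $\Pid(y) = 1/K$ a constant in $y$, so $\Pid(y \mid a, b) \propto \Pid(a \mid y)\,\Pid(b \mid y)$. I then invert calibration on each single model: applying Bayes once more, calibration $\Pid(y \mid a) = \softmax(a)_y$ together with class balance yields $\Pid(a \mid y) = \softmax(a)_y \,\Pid(a)\, K$, and likewise $\Pid(b \mid y) = \softmax(b)_y \,\Pid(b)\, K$. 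Substituting and discarding the $y$-independent factors $\Pid(a)\Pid(b)K^2$ leaves $\Pid(y \mid a, b) \propto \softmax(a)_y \,\softmax(b)_y \propto \exp(a_y + b_y)$, where the last step cancels the two softmax normalizers (also constant in $y$). Since $\exp$ is increasing, $\argmax_y \Pid(y \mid a, b) = \argmax_y (a_y + b_y) = \argmax_y \fens(x)_y = \pred(\fens(x))$, matching the MAP rule and closing the argument.

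The main obstacle is the measure-theoretic bookkeeping rather than the algebra. The outputs $\fstd(x)$ and $\frob(x)$ are continuous, so $\Pid(a \mid y)$ and $\Pid(a)$ should be read as conditional and marginal densities, $\Pid(y \mid a)$ as a regular conditional probability, and the Bayes manipulations above as identities between Radon–Nikodym derivatives holding $\Pid$-a.e.; I would state the calibration and independence hypotheses at this level so the inversion step is rigorous. The one remaining care-point is ties in the $\argmax$: I would note that the Bayes error is achieved by \emph{any} measurable selection among the posterior maximizers, so the particular tie-breaking used by $\pred$ is immaterial and the inequality $\Errid(\fens) \leq \Errid(f_h)$ holds regardless. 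Everything else is a routine application of Bayes' rule.
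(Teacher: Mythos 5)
Your proposal is correct and follows essentially the same route as the paper's proof: both apply Bayes' rule together with conditional independence and calibration to show $\Pid(y \mid \fstd(x), \frob(x)) \propto \exp(\fstd(x)_y + \frob(x)_y)$ (the paper's Lemma~\ref{lem:bayes_prob_softmax}), and then conclude via the optimality of the Bayes/MAP rule among all classifiers that depend on $x$ only through the pair of model outputs (the paper's Lemma~\ref{lem:bayes-opt-is-optimal-appendix}). Your explicit treatment of the measure-theoretic details and of argmax ties is a welcome refinement the paper only gestures at, but it does not change the argument.
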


The proof of Proposition~\ref{prop:calibration-ensemble-optimal} is in Appendix~\ref{app:analysis_appendix}. Intuitively, since $\frob(x) \perp \fstd(x) \mid y$, the Bayes optimal predictor is proportional to multiplying their predicted probabilities, which is equal to adding logits (logits are in log space).
Proposition~\ref{prop:calibration-ensemble-optimal} has an important condition: the two models must be calibrated.
In practice, deep learning models are miscalibrated~\citep{guo2017calibration}, so our first step (Section~\ref{sec:methods}) is to calibrate the models ID.
We explain why the ID calibration step is important for deep neural networks.

\textbf{Why neural networks are miscalibrated.}
Deep neural networks are typically large enough to memorize the training dataset, and are encouraged to magnify their weights (and hence their confidence) to decrease the training loss~\citep{mukhoti2020calibrating,bai2021dont}.
The extent of this miscalibration and overconfidence depends on the training procedure~\citep{hendrycks2019pretraining,desai2020calibration}.
In our case $\fstd$ and $\frob$ are trained in different ways and have different calibration (Appendix~\ref{sec:per-dataset-calibration-appendix}).

\textbf{Why this miscalibration can hurt ensembling.}
% If we directly ensemble these miscalibrated models, ensembles may not get the best of both worlds.
Concretely, consider two models $\fstd'$ and $\frob'$ which are calibrated on $\Pid$.
Let $\fstd(x) = M \fstd'(x)$ for large $M \in \R$ (this magnifies its weights as discussed above), and let $\frob = \frob'$.
$\fstd$ and $\fstd'$ have the same predictions and therefore accuracy but $\fstd$ is highly miscalibrated.
The ensemble is then given by $\fens(x) = \fstd(x) + \frob(x) = M \fstd'(x) + \frob'(x)$.
For very large $M$, $\fens$ and $\fstd'$ have the same predictions---this means that $\Errood(\fens) = \Errood(\fstd) < \Errood(\frob)$, and so ensembling does not get the best of both worlds.
Note that if $\fstd$ and $\frob$ are miscalibrated by the same amount, then ensembling will still get the best of both worlds, but if one of the models is more miscalibrated than the other model OOD (which we see on a real dataset in Appendix~\ref{sec:per-dataset-calibration-appendix}) then ensembling can work poorly.

\subsection{OOD performance of ensembles}
\label{sec:analysis_ood}

So far, we showed that if $\fstd$ and $\frob$ are calibrated on a distribution $P$, then $\fens$ is better than both models on $P$.
However, our validation data is from $\Pid$, so we can only calibrate $\fstd$ and $\frob$ ID.
Even after this ID calibration step, $\fstd$ and $\frob$ are usually very miscalibrated OOD (on $\Pood$---see Appendix~\ref{sec:per-dataset-calibration-appendix} and~\citet{ovadia2019uncertainty}).

Our goal in this section is to build basic intuitions for when ID-calibrated ensembles can get high OOD accuracy.
% In general, analyzing how deep networks perform OOD is very challenging---
We draw inspiration from distribution shift benchmarks, defining simplified and stylized versions of those shifts.
A toy version of our analysis is visualized in Figure~\ref{fig:analysis_intuitions}, where the standard model relies on spurious features that change out-of-distribution.
If these features are ``suppressed'' or ``missing'' OOD, then $\fens$ does better than $\fstd$ and $\frob$ (Figure~\ref{fig:sup_spur}).
However, if these features are anticorrelated OOD (correlated with the opposite label) then the accuracy of $\fens$ is between $\fstd$ and $\frob$ (Figure~\ref{fig:adv_spur}).
We begin by formalizing these shifts, and then analyze the accuracy under these shifts.

\begin{figure*}
    
     \begin{center}
     \hfill
	 \begin{subfigure}[b]{0.25\textwidth}
	     \centering
	     \includegraphics[width=\textwidth]{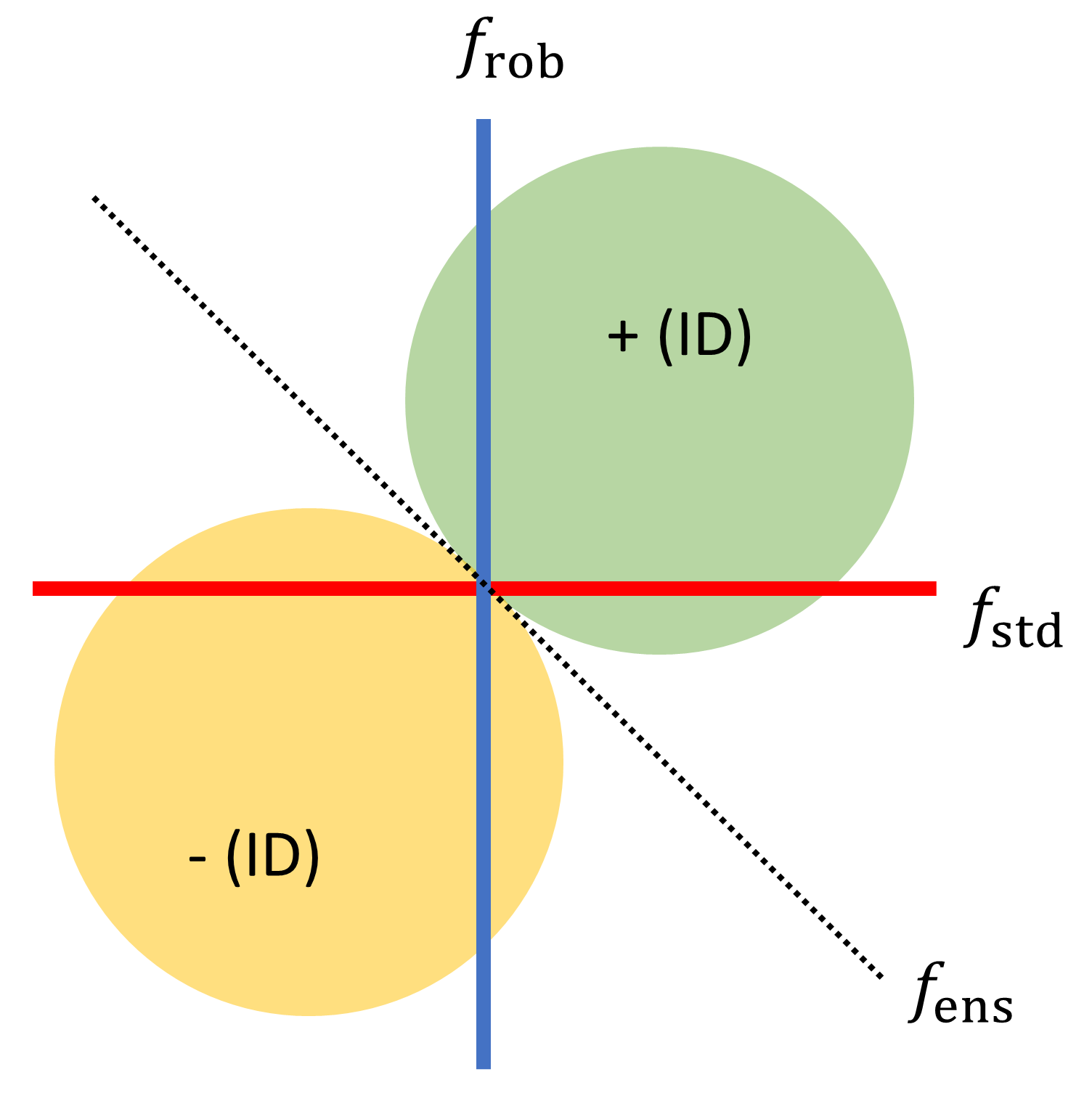}
	     \caption{In-distribution}
	     \label{fig:id_no_spur}
	 \end{subfigure}
     \hfill
     \begin{subfigure}[b]{0.25\textwidth}
         \centering
         \includegraphics[width=\textwidth]{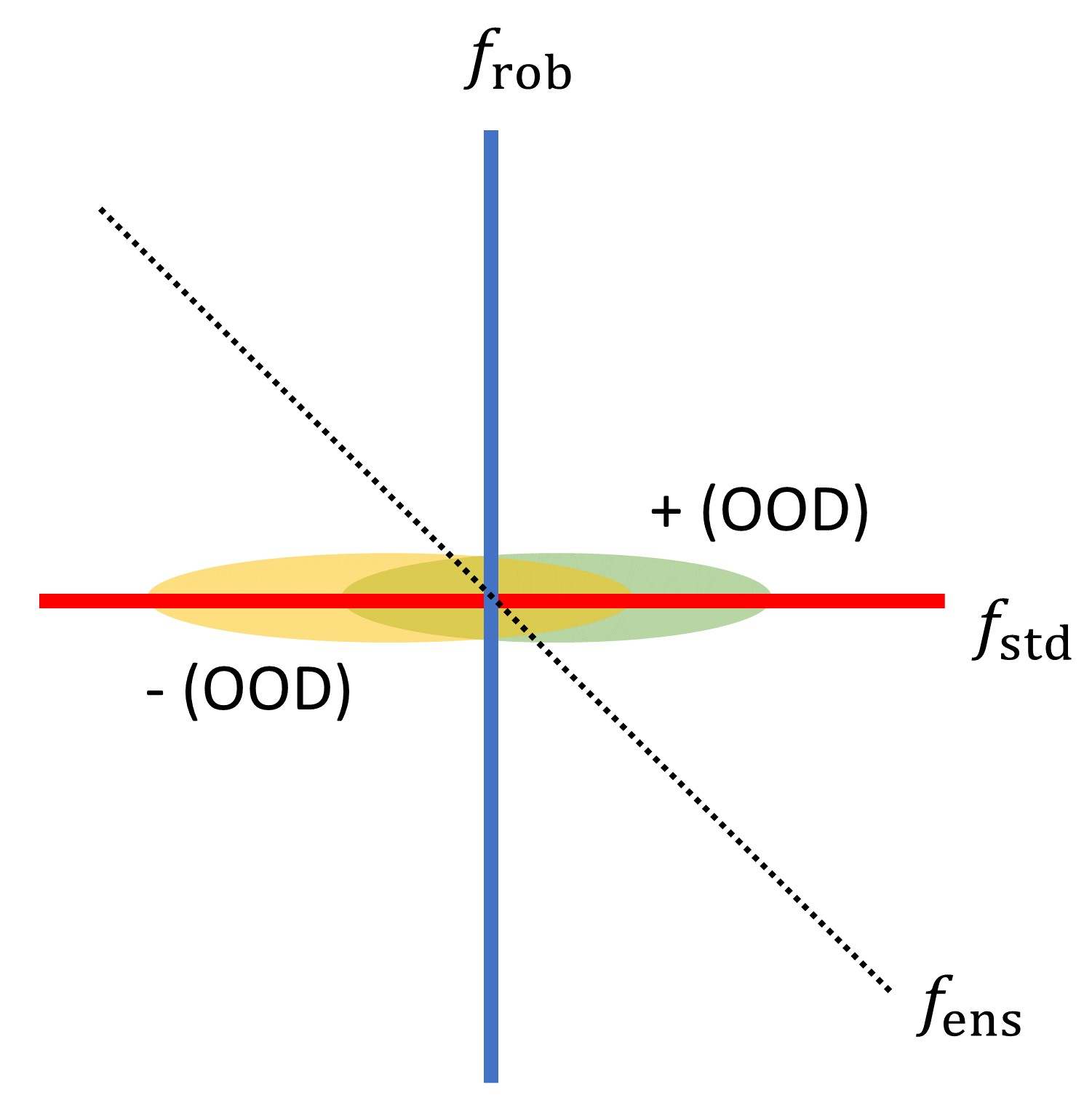}
         \caption{Missing spurious}
         \label{fig:sup_spur}
     \end{subfigure}
     \hfill
     \begin{subfigure}[b]{0.25\textwidth}
         \centering
         \includegraphics[width=\textwidth]{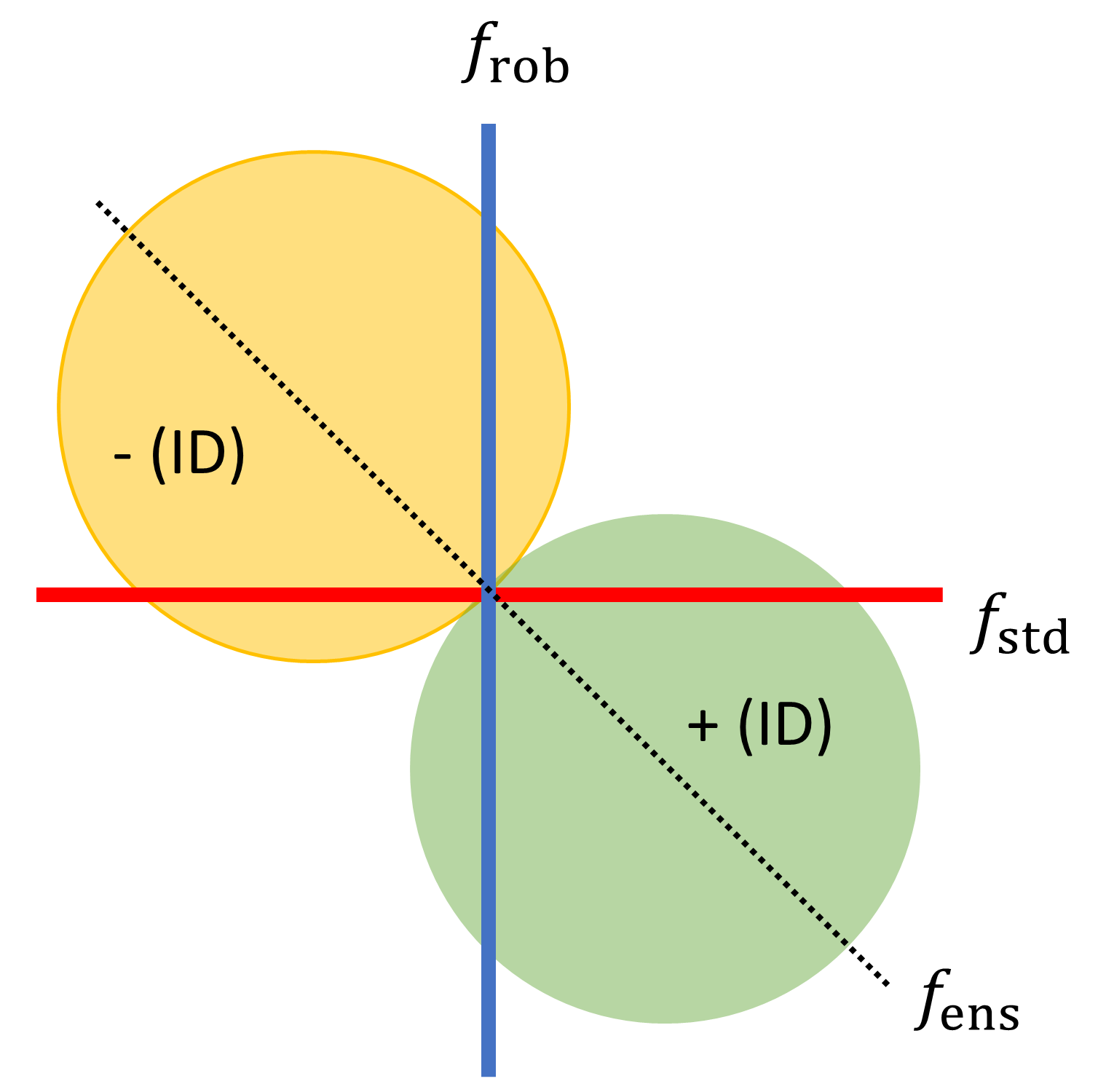}
         \caption{Anticorrelated spurious}
         \label{fig:adv_spur}
     \end{subfigure}
     \hfill
     % \hfill
     \caption{
     % A simple example where the data distribution shifts gradually over time.
     A toy version of our analysis in Section~\ref{sec:analysis}. (Figure~\ref{fig:id_no_spur}) Given a standard model $\fstd$ (red horizontal line) and robust model $\frob$ (blue vertical line) that use different aspects of the data, ensembling their predictions gives a predictor $\fens$ (black dotted line) with lower error---in this case $\fens$ completely separates the positive (green circle) and negative (yellow circle) examples in-distribution (ID). (Figure~\ref{fig:sup_spur}) $\fstd$ uses spurious features, suppose that these features are missing OOD (e.g., the $y$ component of the input goes close to $0$)---then $\fstd$ fares poorly and mislabels half the inputs, but the ensemble $\fens$ is about as accurate as the robust model $\frob$. (Figure~\ref{fig:adv_spur}) On the other hand, suppose the spurious features are \emph{anticorrelated} with the label OOD. In this case $\fens$ intersects the positive (yellow circle) and negative (green circle) distributions, and gets 50\% error---here $\fens$ is worse than $\frob$ but better than $\fstd$.
\ar{Figure can be made nicer and more self-explanatory. What exactly are you trying to convey in these figures actually? You should perhaps shade the misclassified regions of all the classifiers or atleast the ensemble. Also, you are assuming that the models are ID-calibrated so that should be mentioned. Regarding making figure nicer: reduce white space (shorten the axes), increase font size, the orange circles have a blue border - change that, increase thickness of lines. Another thing i noticed is that standard and robust are both equal accuracy ID. Can we change that to make standard model more robust? Also what are the spurious and robust features? They are not marked on the axes.}
     }
	\label{fig:analysis_intuitions}
	\end{center}
	\vskip -0.2in
\end{figure*}

% \subsubsection{Shifts in spurious features}
% \ak{Maybe say something like if you're calibrated OOD then you're good. But ID calibration does not imply OOD calibration (even in practice).}
% \ar{I realized that we have never defined formally what spurious and robust features are. Ideally, this should be folded into assumption 3.1. Or you have to have a setup here in this section about the format of stylized settings you are studying and mention spurious and robust features there.}
% We first describe the different types of shifts in spurious features and their implications for calibrated ensembles. We assume there is no label shift i.e. $\Pood(Y = y) = 1/k$. 

\textbf{Missing spurious.} For our first setting, we draw inspiration from some distribution shift benchmarks. Consider Breeds Living-17~\citep{santurkar2020breeds} where the goal is to classify an image as one of 17 animal categories. The category `bear' in the ID training data contains images of black bears and sloth bears while the OOD dataset has images of brown bears and polar bears.
A standard model trained on the ID dataset might latch onto very specific features about sloth bears (for example the presence of a shaggy mane) which are simply missing in the OOD dataset ($\fstd(x) = 0$).
A robust model could be trained to project these features out~\citep{xie2021innout}, so its predictions are still fairly reliable OOD.
% However, the robust features are still reliable OOD but potentially miscalibrated, leading to the following definition.
% \ar{What is reliable, but miscalibrated? You might want to connect with the $\alpha$ term after the definition below}
% \ak{That's a good point, I'm thinking about how to connect with $\alpha$, but a bit stuck on that}
\begin{definition}[missing spurious]
\label{dfn:missing_spurious}
  A distribution $P_0$ has missing spurious features if for $x \sim P_0$, we have $\fstd(x) = 0$ almost surely and there exists some $\alpha \in \R^+$, such that for all $x \in \cX$, $P_0(Y = y \mid \frob(X) = \frob(x)) = \softmax(\alpha \frob(x))_y$.
\end{definition}
\pl{I don't know how to follow the order of quantifiers, which should matter}
\ak{Made it more precise, is this better?}
\pl{I'm actually pretty confused by this definition - what is being suppressed and what's missing?}
\ak{The features to the standard model are `missing'. Suppressed is the next one.}

\textbf{Suppressed features.} In some datasets, such as satellite remote sensing datasets~\citep{jean2016combining,xie2021innout}, a standard model can latch onto country-specific features that may be less prevalent OOD.

\begin{definition}[suppressed features]
\label{dfn:suppressed_spurious}
  A distribution $P_\tau$, for $\tau \in \R^+$, is said to have suppressed features if for all $x \in \cX$ and $f \in \{\fstd, \frob\}$, $P_{\tau}(Y = y \mid f(X) = f(x)) = \softmax(\tau f(x))_y$
\end{definition}
% \ar{A bit funny to have the same suppression factor for both ID and OOD---is this necessary? Sorry, didn't have time to go over the arithmetic. Ideally, it should be written in a way that the robust features are not suppressed or suppressed less than the spurious features. I had a comment from before, but following up on that, why is it called ``suppressed spurious'' but the suppression seems symmetric wrt the features}
% \ak{Sorry, forgot to answer this. Yeah it needs to be the same, otherwise you can find a counter-example. But maybe some more general condition would work! Also, changed it to suppressed features.}

\pl{these conditions look pretty strong since they look like calibration}
\ak{Yes, it's strong, although it's not calibration because of the $\tau$ term. I'm trying to capture the fact that the model can be substantially overconfident in its prediction. What makes it strong is that $\tau$ needs to be the same for $\fstd, \frob$, but that's unfortunately what the math needs. To truly understand ensembling and why it works well, we need to understand what features deep learning learns and how they combine.}

\textbf{Anticorrelated spurious.} In some settings, the spurious feature can be correlated with a label ID but \emph{anticorrelated} OOD. For example, in Waterbirds~\citep{sagawa2020group}, the task is to classify if an image contains a waterbird or a landbird where in the ID dataset, waterbirds are primarily featured with water backgrounds and landbirds with land backgrounds, but in the OOD datasets the backgrounds are flipped such that landbirds occur with water backgrounds and vice versa. This motivates the final definition of spurious shifts where the spurious features (background) are anticorrelated with the label OOD. 
% \ar{adversarial doesn't imply anticorrelated. I am in favor or anticorrelated spurious rather than adversarial spurious because adversarial means different things to different people}
% \ak{Fixed}

\begin{definition}[anticorrelated spurious]
  A distribution $\Padv$ is said to be \adv{} if for some $\alpha, \beta > 0$, for all $x \in \cX$, $\Padv(Y = y | \fstd(x)) = \softmax(-\beta \fstd(x))_y$ (note the minus sign), while $\Padv(Y = y \mid \frob(x)) = \softmax(\alpha \frob(x))_y$.
  \end{definition}

If the OOD distribution is a mixture of suppressed features and missing spurious features, then the ensemble $\fens$ gets the best of both worlds.
\newcommand{\suppMissingEnsWorksText}{
If the OOD contains a mixture of suppressed features and missing spurious features i.e., $\Pood = \alpha P_{\tau} + (1 - \alpha) P_0$, and $P_{\tau}$ and $P_0$ are class-balanced, then we have $\Errood(\fens) \leq \Errood(\frob)$ and $\Errood(\fens) \leq \Errood(\fstd)$.
}
\begin{proposition}
\label{prop:supp_missing_ens_works}
\suppMissingEnsWorksText{}
\end{proposition}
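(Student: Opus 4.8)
The plan is to exploit the linearity of the $0$--$1$ error in the mixture: since $\Pood = \alpha P_\tau + (1-\alpha)P_0$, for any classifier $f$ we have $\Errood(f) = \alpha\,\Err_{P_\tau}(f) + (1-\alpha)\,\Err_{P_0}(f)$. It therefore suffices to establish, on each mixture component separately, that $\fens$ does at least as well as $\frob$ and at least as well as $\fstd$; the two desired inequalities then follow by taking the convex combination. I would prove a slightly stronger statement on $P_\tau$ (that $\fens$ is Bayes optimal there, hence beats every classifier) and handle $P_0$ by a direct comparison of predictions.

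For the component $P_\tau$, I would reuse the mechanism behind Proposition~\ref{prop:calibration-ensemble-optimal}. Combining the two suppressed-feature conditions of Definition~\ref{dfn:suppressed_spurious} with the conditional-independence assumption and class balance gives, via Bayes' rule,
\begin{align*}
P_\tau(Y=y \mid \fstd(x),\frob(x))
&\propto \frac{P_\tau(Y=y\mid \fstd(x))\,P_\tau(Y=y\mid\frob(x))}{P_\tau(Y=y)} \\
&\propto \softmax(\tau\fstd(x))_y\,\softmax(\tau\frob(x))_y \\
&\propto \exp\!\big(\tau\,\fens(x)_y\big),
\end{align*}
where the last step uses $\fens = \fstd + \frob$ and $P_\tau(Y=y)=1/K$. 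Hence $\argmax_y P_\tau(Y=y\mid\fstd(x),\frob(x)) = \argmax_y \fens(x)_y = \pred(\fens(x))$, so $\fens$ realizes the Bayes-optimal prediction among all functions of $(\fstd(x),\frob(x))$. Since both $\fstd$ and $\frob$ are such functions, $\Err_{P_\tau}(\fens)\le\Err_{P_\tau}(\fstd)$ and $\Err_{P_\tau}(\fens)\le\Err_{P_\tau}(\frob)$.

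For the component $P_0$, Definition~\ref{dfn:missing_spurious} gives $\fstd(x)=0$ almost surely, so $\fens(x)=\fstd(x)+\frob(x)=\frob(x)$ and the two classifiers make identical predictions; thus $\Err_{P_0}(\fens)=\Err_{P_0}(\frob)$, which already settles the comparison against $\frob$ on this component. To compare against $\fstd$, note that the missing-spurious condition $P_0(Y=y\mid\frob(x))=\softmax(c\,\frob(x))_y$ (for the positive scaling constant $c$ in the definition) makes $\pred(\frob(x))=\argmax_y \softmax(c\,\frob(x))_y$, so $\Err_{P_0}(\frob)=\E_{x,y\sim P_0}[1-\max_y\softmax(c\,\frob(x))_y]\le (K-1)/K$ because the maximum entry of a probability vector over $K$ classes is at least $1/K$. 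On the other hand $\fstd(x)=0$ carries no information about $y$, so its prediction is independent of the label and, by class balance, achieves accuracy exactly $1/K$, i.e. $\Err_{P_0}(\fstd)=(K-1)/K$. Combining, $\Err_{P_0}(\fens)=\Err_{P_0}(\frob)\le\Err_{P_0}(\fstd)$.

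Assembling the per-component inequalities through the mixture decomposition yields both $\Errood(\fens)\le\Errood(\frob)$ and $\Errood(\fens)\le\Errood(\fstd)$. The main obstacles are two points of care rather than deep difficulties. First, the product-of-probabilities identity on $P_\tau$ requires the conditional-independence assumption to hold on the component $P_\tau$ (not merely on the mixture $\Pood$); I would state the analysis as invoking the assumption on each component, which is the intended reading. Second, the comparison against $\fstd$ on $P_0$ hinges on the tie-breaking convention for $\argmax$ of the all-zeros score vector; I would fix the standard convention under which an uninformative, label-independent predictor has error exactly $(K-1)/K$ on class-balanced data, so that any classifier matching the calibration form of $\frob$ is guaranteed to match or beat it.
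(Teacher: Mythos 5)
Your proposal is correct and follows essentially the same route as the paper's proof: the same mixture decomposition into per-component comparisons, the same Bayes-optimality argument on $P_\tau$ (the paper reaches it by scaling both models by $\tau$ and invoking Proposition~\ref{prop:calibration-ensemble-optimal}, while you inline the identical Bayes-rule computation showing $P_\tau(y \mid \fstd(x), \frob(x)) \propto \exp(\tau \fens(x)_y)$), and the same identification $\fens = \frob$ on $P_0$ together with the trivial $(K-1)/K$ bound separating $\frob$ from the uninformative $\fstd$. Your two flagged caveats (conditional independence holding per-component, and the tie-breaking convention for $\pred$ of the zero vector) are points the paper glosses over, and handling them as you propose is the intended reading.
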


On the other hand, if the OOD distribution contains \adv{} features, then the accuracy of $\fens$ is in between the standard and robust models.
\newcommand{\antiCorrelatedEnsFailsText}{
If spurious features are anticorrelated OOD so that $\Pood = \Padv$, then even if $\Padv$ is class-balanced, $\Errood(\frob) \leq \Errood(\fens) \leq \Errood(\fstd)$.
}
\begin{proposition}
\label{prop:anti_correlated_ens_fails}
\antiCorrelatedEnsFailsText{}
\end{proposition}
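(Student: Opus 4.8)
The plan is to reduce the comparison of the three classifiers' OOD errors to a pointwise comparison of a single quantity, the Bayes posterior under $\Padv$, and then to settle both inequalities with an elementary $\argmax$ argument. Throughout I write $u = \fstd(x)$ and $v = \frob(x)$.

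First I would compute the joint posterior of the label given both model outputs. By the conditional independence assumption $u \perp v \mid Y$, the joint class-conditional law factorizes, and combining this with Bayes' rule and class-balance (so the uniform prior cancels in $y$) gives
\[
\Padv(Y=y \mid u, v) \;\propto_y\; \Padv(Y=y\mid u)\,\Padv(Y=y\mid v).
\]
Substituting the two defining conditions $\Padv(Y=y\mid u) = \softmax(-\beta u)_y \propto_y e^{-\beta u_y}$ and $\Padv(Y=y\mid v) = \softmax(\alpha v)_y \propto_y e^{\alpha v_y}$ and renormalizing yields
\[
\eta_y(u,v) \;:=\; \Padv(Y=y\mid u,v) \;=\; \softmax(\alpha v - \beta u)_y.
\]
This is the one computation carrying the real content: it is where both conditional independence and class-balance are essential, and where one must be careful that fusing the two separately stated conditions on $u$ and on $v$ into a claim about the \emph{joint} posterior is legitimate. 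I expect this derivation, rather than the comparison that follows, to be the main obstacle.

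Next I would observe that for any classifier $g$ that is a function of $(u,v)$ — and $\frob$ (outputting $\argmax_y v_y$), $\fstd$ ($\argmax_y u_y$), and $\fens$ ($\argmax_y (u_y + v_y)$) all are — the tower property gives
\[
\Errood(g) \;=\; \E_{(u,v)}\big[\Padv(Y \neq g(u,v)\mid u,v)\big] \;=\; 1 - \E_{(u,v)}\big[\eta_{g(u,v)}(u,v)\big].
\]
Hence ordering the three errors is equivalent to ordering the expected posteriors of the predicted classes, and it suffices to prove the pointwise chain $\eta_{\frob(x)} \geq \eta_{\fens(x)} \geq \eta_{\fstd(x)}$ for almost every $(u,v)$.

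Finally I would establish this chain by $\argmax$ inequalities. For the left inequality let $a = \argmax_y v_y$ and $b = \argmax_y(u_y + v_y)$; optimality of $a$ for $v$ gives $v_a \geq v_b$, and optimality of $b$ for $u+v$ gives $u_b + v_b \geq u_a + v_a$, i.e. $u_b - u_a \geq v_a - v_b \geq 0$. Since $\alpha,\beta>0$, this yields $\alpha(v_a - v_b) + \beta(u_b - u_a) \geq 0$, i.e. $\alpha v_a - \beta u_a \geq \alpha v_b - \beta u_b$, and because $\eta$ is a $\softmax$ of $\alpha v - \beta u$ we conclude $\eta_a \geq \eta_b$. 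The right inequality is symmetric: with $c = \argmax_y u_y$, optimality of $c$ for $u$ gives $u_c \geq u_b$ while optimality of $b$ for $u+v$ gives $v_b - v_c \geq u_c - u_b \geq 0$, so $\alpha(v_b - v_c) + \beta(u_c - u_b) \geq 0$ and $\eta_b \geq \eta_c$. Taking expectations and applying the reduction gives $\Errood(\frob) \leq \Errood(\fens) \leq \Errood(\fstd)$. Intuitively the chain reflects that the standard model's signal enters the ensemble through $u_y$, which appears with the wrong sign $-\beta u$ in the OOD posterior, so the ensemble is pulled strictly between the (correctly signed) robust model and the (backwards) standard model. Notably the comparison step needs no distributional assumptions beyond the posterior form, so it covers the general multiclass class-balanced statement directly; only the posterior computation requires care.
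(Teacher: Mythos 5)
Your proposal is correct and follows essentially the same route as the paper's proof: you derive the joint OOD posterior $\softmax(\alpha v - \beta u)_y$ via conditional independence, Bayes' rule, and class balance (this is exactly the paper's Lemma~\ref{lem:bayes_prob_softmax} applied to the rescaled, sign-flipped outputs), then reduce the error comparison to a pointwise ordering of posterior probabilities at the three predicted classes, and settle that ordering with the same two $\argmax$ inequalities (your ``sum of two nonnegative terms'' step is algebraically the same as the paper's weighted combination of inequalities). The only cosmetic difference is that you fold the paper's standalone lemma into the proof body; the substance is identical.
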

The full proofs appear in Appendix~\ref{app:analysis_appendix}.

\section{Datasets}
\label{sec:datasets}
\ak{This section needs to be compressed, maybe just one line for each dataset and intervention, and move the rest of the details to the Appendix}

We consider fourteen standard datasets, spanning multiple robustness interventions, types of shifts, and modalities (vision, language, time-series).
We first describe the robustness interventions we consider, and then describe the datasets and types of shifts.
All the datasets have been used by prior works on robustness, so we use their model checkpoints for reliable comparisons.
See Appendix~\ref{sec:more-info-datasets-appendix} for more details.
% \ak{Say all these datasets have been used by prior work and we use their checkpoints---but the purpose of this is to show the problem occurs widely}

% We run experiments on two remote sensing datasets used in prior work studying ID-OOD tradeoffs~\citep{xie2021innout}.
% These datasets consist of a core input $x$ (image data or time series data) and metadata $z$ (e.g., location, meteorological climate data). 
% The metadata is spuriously correlated with the target---using the metadata to predict labels improves accuracy in-distribution (ID), but hurts accuracy out-of-distribution.
% ~\citet{xie2021innout} consider a standard model that takes in both the core inputs and metadata to predict the target, and a robust model that only takes in the core inputs and does some additional pretraining.
% They call these the `aux-in' and `aux-out' models respectively.

\textbf{Robustness interventions}:
\begin{enumerate}
	\item In-N-Out:~\citet{xie2021innout} use domain knowledge to project out spurious features in the input, and do an additional pretraining step. They call this robust model ``aux-out'' and show that it improves accuracy OOD, but hurts accuracy ID, compared to ERM (which they call ``aux-in'').
	% We use their datasets and model checkpoints.
% They call these the `aux-in' and `aux-out' models respectively.
	\item Lightweight fine-tuning:
	% Recent works show that tuning only parts of a pretrained model can often do better OOD even though the ID performance is worse~\citep{li2021prefix,houlsby2019parameter}.
	We take checkpoints from~\citet{kumar2022finetuning} where the standard model fine-tunes all parameters of a pretrained model on an ID dataset, and the robust model only learns the top linear `head' layer (which does better OOD but worse ID).
	\item Zero-shot language prompting: CLIP~\citep{radford2021clip} is a multi-modal model that can predict the label of an image by comparing the image embedding, with prompts such as `photo of an apple'. They show that this zero-shot language prompting approach (robust model) is more accurate OOD than fine-tuning the model (standard model), although ID accuracy of the robust model is worse.
	\item Group distributionally robust optimization (DRO)~\citep{sagawa2020group}: Standard ERM models often latch on to spurious correlations in a dataset, such as image background color, or the occurrence of certain words in a sentence. Group DRO essentially upweights examples where this spurious correlation is not present.
	\item CORAL~\citep{sun2016deep} aims to align feature representations across different domains, by penalizing differences in the means and covariances
	of the feature distributions.
	The hope is that this generalizes better to OOD domains.
	% The original formulation in~\citet{sagawa2020group} assumes the spurious correlations are annotated, but newer variants~\citep{liu2021jtt} can work even without these annotations.
 \end{enumerate}

We consider three types of \natshifts{} (geography shifts, subpopulation shifts, style shifts), and we also consider adversarially synthesized ``anticorrelated'' spurious shifts.

\textbf{Geography shifts.} In geography shifts the ID data comes from some locations, and the OOD data comes from a different set of locations. One motivation is that in many developing areas training data may be unavailable because of monetary constraints~\citep{jean2016combining}.
\begin{enumerate}
	\item \textbf{LandCover}~\citep{russwurm2020meta}: The goal is to classify a satellite time-series measured by NASA's MODIS satellite~\citep{modis2015landcover} into one of 6 land types (e.g., "grassland", "savannas"). The ID data contains time-series from outside Africa, and the OOD data consists of time-series from Africa.~\citet{xie2021innout} use the In-N-Out intervention.
	\item \textbf{Cropland}~\citep{wang2020weakly}: The goal is to predict whether a satellite image is of a cropland or not. The ID dataset contains images from Iowa, Missouri, and Illinois, and the OOD dataset contains images from Indiana and Kentucky.~\citet{xie2021innout} use the In-N-Out intervention.
	\item \textbf{iWildCam}~\citep{beery2020iwildcam,koh2021wilds}: The goal is to classify the species of an animal given a photo taken by a camera placed in the wild. The ID dataset consists of photos taken by over 200 cameras, and the OOD dataset consists of photos taken by held-out cameras placed in different locations.~\citet{koh2021wilds} use the CORAL intervention.
\end{enumerate}
\ak{Optionally move the discussion of what's core and spurious to the Appendix}

\textbf{Subpopulation shifts.} In subpopulation shifts, the ID data contains a few sub-categories (e.g., black bear and sloth bear), and the OOD data contains different sub-categories (e.g., brown bears and polar bears) of the same parent category (e.g., bears). For both datasets below,~\citet{kumar2022finetuning} use the lightweight fine-tuning intervention.
\begin{enumerate}
	\item \textbf{Living-17}~\citep{santurkar2020breeds}: the goal is to classify an image as one of 17 animal categories such as ``bear'', where the ID and OOD datasets have different species of bears. 
	\item \textbf{Entiy-30}~\citep{santurkar2020breeds}: similar to Living-17, except the goal is to classify an image as one of 30 entity categories such as ``food'', ``motor vehicle'', and ``insect''.
\end{enumerate}
\ak{Optionally move the discussion of what checkpoints we used to the Appendix.}

\textbf{Style shifts.} In style shifts, the ID data has a certain style (e.g., sketches), and the OOD data has a different style (e.g., real photos, renditions). 
\begin{enumerate}
	\item \textbf{DomainNet}~\citep{peng2019moment}: a standard domain adaptation dataset. Here, our ID dataset contains ``sketch'' images (e.g., drawings of apples, elephants, etc), and the OOD dataset contains ``real'' photos of the same categories.~\citet{kumar2022finetuning} use the lightweight fine-tuning intervention.
	\item \textbf{CelebA}~\citep{liu2015deep}: the goal is to classify a portrait of a face as ``male'' or ``female'' - the ID dataset contains images of people without hats, and the OOD dataset contains images of people wearing hats (some facial features might be ``suppressed'' or ``missing'' with hats).~\citet{xie2021innout} use the In-N-Out intervention.
	\item \textbf{CIFAR->STL}: standard domain adaptation dataset~\citep{french2018selfensembling}, where the ID is CIFAR-10~\citep{krizhevsky2009learningmultiple}, and the OOD is STL~\citep{coates2011stl10}. The task is to classify an image into one of 10 categories such as ``dog'', ``cat'', or ``airplane''.~\citet{kumar2022finetuning} use the lightweight fine-tuning intervention.
	\item \textbf{ImageNet}~\citep{russakovsky2015imagenet}: a large scale dataset where the goal is to classify an image into one of 1000 categories.~\citet{radford2021clip} use the zero-shot language prompting intervention. We evaluate on 3 standard OOD datasets: \textbf{ImageNetV2}~\citep{recht2019doimagenet},\textbf{ImageNet-R}~\citep{hendrycks2020many}, and \textbf{ImageNet-Sketch}~\citep{wang2019learningrobust}.
\end{enumerate}

\textbf{\Advshifts{}.} In these adversarially synthesized shifts, the ID dataset contains a feature that is correlated with a label, but this correlation is flipped OOD.~\citet{jones2021selective} use the group DRO intervention. Note that \emph{for the OOD numbers, we measure the standard worst group accuracy (across spurious and label annotations), as done by~\citet{jones2021selective}}.
% For example, waterbirds is explicitly constructed so that ``water'' backgrounds are correlated with ``waterbird'' labels in the ID, but anti-correlated OOD.
\begin{enumerate}
	\item \textbf{Waterbirds}~\citep{sagawa2020group}: The goal is to classify an image as a ``waterbird'' or ``landbird''. The dataset is synthetically constructed to have \adv{} features: ``water'' backgrounds are correlated with ``waterbird'' labels in the ID, but anticorrelated OOD.
	\item \textbf{MNLI}~\citep{williams2018broad}: The goal is to predict whether a hypothesis is entailed, contradicted by, or neutral to an associated premise.~\citet{sagawa2020group} partition the dataset so that ``negation'' words are correlated with the contradiction label ID but these words are anticorrelated with the contradiction label OOD.
	\item \textbf{CivilComments}~\citep{borkan2019nuanced}: The goal is to predict whether a comment is toxic or not.~\citet{jones2021selective} partition the dataset so that in the ID split mentions of a Christian identity are correlated with non-toxic comments, but in the OOD split mentions of a Christian identity are correlated with a toxic comment. CivilComments is also used in~\citet{koh2021wilds}.
\end{enumerate}

\section{Results}
\label{sec:experiments}

\begin{table*}[t]
\caption{
~\citet{xie2021innout} propose In-N-Out (self-training) to mitigate ID-OOD accuracy tradeoffs---their method requires lots of unlabeled data.
Even without this unlabeled data, \calens{} are competitive with or outperform self-training ID and OOD.
We show results on all datasets used by~\citet{xie2021innout}.
}
\label{tab:self_train_results}
\vskip 0.15in
\begin{center}
\begin{tabular}{ccccccc}
\toprule
                      & \multicolumn{2}{c}{Cropland}            & \multicolumn{2}{c}{Landcover} & \multicolumn{2}{c}{CelebA} \\
                      & ID Acc                  & OOD Acc                 & ID Acc                  & OOD Acc   & ID Acc                  & OOD Acc   \\
\midrule
Standard model        & 95.3 (0.0)          & \textbf{85.6 (5.8)}  & 76.9 (0.3)          & 55.7 (1.1)         &  90.4 (0.5)  &  74.5 (0.6)  \\
Robust model          & 95.1 (0.1)          & 89.8 (0.4)             & 72.7 (0.2)           & \textbf{60.4 (1.1)}      &   \textbf{94.5 (0.2)}  &  76.3 (1.2) \\
Self-training         & 95.3 (0.2)          & \textbf{90.6 (0.6)} & \textbf{77.0 (0.4)} & \textbf{61.0 (0.7)}     & 93.1 (0.2) & \textbf{78.7 (0.7)} \\
Cal ensembling & \textbf{95.6 (0.1)} & \textbf{91.3 (0.8)} & \textbf{77.2 (0.2)} & \textbf{60.8 (0.8)}     &  \textbf{94.5 (0.5)} & \textbf{77.6 (1.2)} \\
\bottomrule
\end{tabular}
\end{center}
\vskip -0.1in
\end{table*}

% In the theoretical section, we showed that ensembling can work very well (gets the best of the standard and robust models) when there are spurious correlations ID that are suppressed or missing OOD
% But if the spurious correlations are more adversarial (anti-correlated) OOD, the ensembling does not do so well.
% We run experiments on a wide array of benchmark datasets, spanning blah
% Connecting geography, style, etc, shifts, to suppressed
% We run experiments on a wide range of naturally occuring shifts (geography shifts, style shifts, subpopulation shifts)
% We also run 
% \ak{Should we recap the goal again: goal is to get strong ID accuracy of standard model, robust accuracy of OOD model}
% \ak{Maybe say self-training is on the 3 datasets used by prior work}
% Our analysis (Section~\ref{}) predicts that calibrated ensembles can get the best of both worlds for \natshifts{}, but not for \advshifts{}.
In Section~\ref{sec:experiments-can-mitigate}, we show that \calens{} get the best of both worlds across the \numnat{} \natshifts{} we consider, but not on the \numadv{} adversarially synthesized \advshifts{}, as predicted by our analysis in Section~\ref{sec:analysis}.
\Calens{} match or outperform a prior state-of-the art approach based on self-training~\citep{xie2021innout}, which requires additional unlabeled data.
In Section~\ref{sec:experiments-how-ensemble}, we show ablations of our method.
Interestingly, we find that a common approach of tuning the ensemble weights to optimize ID accuracy gets lower OOD accuracies than \calens{}.
% we show that \emph{how} we ensemble the models is key: tuning the ensemble weights to optimize ID accuracy leads to poor OOD performance.
% However, calibrating (also only on \emph{in-distribution} data) leads to improved accuracies, as suggested by our analysis in Section~\ref{sec:analysis_id}.
% We sanity check that ensembling two standard or two robust models does not work as well (even with calibration).
% In Section~\ref{sec:experiments_models_miscalibrated}, we show that even after calibrating on ID data, the standard and robust models are miscalibrated OOD---but ensembles are still able to effectively combine their predictions.

\begin{table*}[t]
\caption{
\emph{
In-distribution (ID)} accuracies for the standard model, robust model, and \calens{}, across \numidnat{} \natshift{} datasets (colored blue) and \numadv{} \advshift{} datasets (colored red and starred).
On the \numidnat{} ID \natshift{} datasets, \calens{} match or outperform the best model in 8/9 cases, and on average outperforms both the standard and robust models.
For the remaining dataset, CIFAR-10, \calens{} close 97\% of the gap between the standard and robust model.
% As expected from our analysis (Section~\ref{sec:analysis}), \calens{} do not perform as well on \advshift{} datasets.
}
\label{tab:id_results}
\vskip 0.15in
\begin{center}

\begin{tabular}{cccccccc}
\toprule
& \color{blue}{Ent30} & \color{blue}{DomNet} & \color{blue}{CIFAR10} & \color{blue}{Liv17} & \color{blue}{Land} & \color{blue}{Crop} & \color{blue}{CelebA}\\
\midrule
Standard & \textbf{93.6 (0.2)} & 83.9 (1.0) & \textbf{97.4 (0.1)} & 96.9 (0.1) & \textbf{76.9 (0.3)} & 95.3 (0.0) & 90.4 (0.5)\\
Robust & 90.7 (0.2) & 89.2 (0.1) & 92.0 (0.0) & 97.0 (0.0) & 72.7 (0.2) & 95.1 (0.1) & \textbf{94.5 (0.2)}\\
Cal Ensemble & \textbf{93.7 (0.1)} & \textbf{91.2 (0.7)} & 97.2 (0.1) & \textbf{97.2 (0.2)} & \textbf{77.2 (0.2)} & \textbf{95.6 (0.1)} & \textbf{94.5 (0.5)}\\
\bottomrule
\end{tabular}
\vspace{1.2mm}
\newline
\begin{tabular}{ccc|ccc}
\toprule
 & \color{blue}{ImageNet} & \color{blue}{iWildCam} \;\;\; & \;\;\; \color{red}{MNLI*} & \color{red}{Waterbirds*} & \color{red}{CivilComments*}\\
\midrule
Standard & 81.7 (-) & 82.4 (-) \;\;\; & \;\;\; \textbf{82.9 (-)} & 88.3 (-) & \textbf{92.8 (-)}\\
Robust & 68.4 (-) & 81.8 (-) \;\;\; & \;\;\; 81.5 (-) & \textbf{93.2 (-)} & 86.3 (-)\\
Cal Ensemble & \textbf{82.0 (-)} & \textbf{84.0 (-)} \;\;\; & \;\;\; \textbf{82.8 (-)} & 92.9 (-) & 91.4 (-)\\
\bottomrule
\end{tabular}

\end{center}
\vskip -0.1in
\end{table*}

\begin{table*}[t]
\caption{
\emph{Out-of-distribution (OOD)} accuracies for the standard model, robust model, and \calens{}, across \numnat{} \natshift{} datasets (colored blue) and \numadv{} \advshift{} datasets (colored red and starred).
On the \numnat{} OOD \natshift{} datasets, \calens{} match or outperform the best model in 10/11 cases, and on average outperforms both the standard and robust models.
For the remaining dataset, DomainNet, \calens{} close 96\% of the gap between the standard and robust model.
As expected from our analysis (Section~\ref{sec:analysis}), on \advshifts{} the accuracy of \calens{} is between the standard and robust models.
}
\label{tab:ood_results}
\vskip 0.15in
\begin{center}

\begin{tabular}{cccccccc}
\toprule
& \color{blue}{Ent30} & \color{blue}{DomNet} & \color{blue}{STL10} & \color{blue}{Liv17} & \color{blue}{Land} & \color{blue}{Crop} & \color{blue}{CelebA}\\
\midrule
Standard & 60.7 (0.1) & 55.3 (0.4) & 82.4 (0.3) & 77.7 (0.6) & 55.7 (1.1) & \textbf{85.6 (5.8)} & 74.5 (0.6)\\
Robust & 63.2 (1.1) & \textbf{87.2 (0.1)} & 85.1 (0.2) & \textbf{82.2 (0.2)} & \textbf{60.4 (1.1)} & 89.8 (0.4) & 76.3 (1.2)\\
Cal Ensemble & \textbf{64.7 (0.5)} & 86.1 (0.2) & \textbf{87.3 (0.2)} & \textbf{82.2 (0.6)} & \textbf{60.8 (0.8)} & \textbf{91.3 (0.8)} & \textbf{77.6 (1.2)}\\
\bottomrule
\end{tabular}
\vspace{1.2mm}
\newline
\begin{tabular}{ccccc|ccc}
\toprule
 & \color{blue}{ImNet-R} & \color{blue}{ImNet-V2} & \color{blue}{ImNet-Sk} & \color{blue}{iWildCam} \;\;\; & \;\;\; \color{red}{MNLI*} & \color{red}{Waterbirds*} & \color{red}{Comments*}\\
\midrule
Standard & 52.4 (-) & 71.5 (-) & 40.5 (-) & 61.1 (-) \;\;\; & \;\;\; 65.5 (-) & 60.4 (-) & 56.8 (-)\\
Robust & 77.5 (-) & 61.9 (-) & 48.2 (-) & 63.0 (-) \;\;\; & \;\;\; \textbf{77.4 (-)} & \textbf{88.1 (-)} & \textbf{84.2 (-)}\\
Cal Ensemble & \textbf{77.9 (-)} & \textbf{73.2 (-)} & \textbf{52.3 (-)} & \textbf{66.3 (-)} \;\;\; & \;\;\; 73.2 (-) & 81.1 (-) & 71.8 (-)\\
\bottomrule
\end{tabular}

\end{center}
\vskip -0.1in
\end{table*}

\subsection{Main Results}
\label{sec:experiments-can-mitigate}

\textbf{Competitive with self-training.}
~\citet{xie2021innout} propose self-training on unlabeled data to mitigate ID-OOD accuracy tradeoffs.
We run experiments on all 3 datasets they consider (Landcover, Cropland, CelebA), taking checkpoints from the official CodaLab implementation of~\citet{xie2021innout}.
% \footnote{We compare to the In-N-Out results on their official CodaLab worksheet. Self-training requires additional unlabeled data which is not available in many datasets.}
% \footnote{We compare to the In-N-Out results on their official CodaLab worksheet.}
Table~\ref{tab:self_train_results} shows that \calens{} match or outperform self-training on all 3 of their datasets, both ID and OOD.
We believe this is interesting because our method is simple and does not need additional unlabeled data (which, for example, the other datasets do not have).
Note that~\citet{xie2021innout} also consider additional self-training on OOD (+ID) unlabeled data, which we do not compare with because it uses additional data (the OOD data).
\pl{this is surprising; presumably we'd do better if we used both unlabeled data and calibrated ensembles?}
\ak{Yeah, that sounds reasonable. I added a discussion of this to future work, and responded on Slack. Self-training has three benefits: 1. May potentially work on anticorrelated spurious features (based on some intuitive/handwavy examples I have), 2. You get one single model at the end, so faster inference time. Although distillation could get you the same benefits. 3. In-N-Out plus additional self-training on \emph{OOD unlabeled data} outperforms calibrated ensembles.}

\ak{maybe reference the green and red/starred styling of the tables in the text}
\textbf{Strong ID and OOD accuracy.}
Across the \numnat{} \natshifts{}, \calens{} get the best of both worlds, typically outperforming the standard and robust model both ID (Table~\ref{tab:id_results}) and OOD (Table~\ref{tab:ood_results}).
Averaged across the \emph{\natshift{}} datasets, \calens{} get \calaccidnatural\% ID (vs. \stdaccidnatural{}\% for the standard model and \robaccidnatural{}\% for the robust model) and \calaccoodnatural{}\% OOD (vs. \robaccoodnatural{}\% for the robust model and \stdaccoodnatural{}\% for the standard model).
The method works across the board---\calens{} achieve the best performance on 8/9 ID \natshifts{}, and on 10/11 OOD \natshifts{}.
For the remaining two cases, DomainNet OOD and CIFAR-10 ID, \calens{} close over 95\% of the gap between the standard and robust model.
% \pl{for ID or OOD?}\ak{mentioned it before the comma}
 % (96\% for DomainNet, 97\% for CIFAR-10).

\textbf{Shift type is important.}
Our analysis in Section~\ref{sec:analysis} predicts that \calens{} \emph{do not} work as well on \advshifts{}, where a spurious feature is correlated with the label but anticorrelated OOD.
Indeed, in these cases the OOD accuracy of \calens{} is between the standard and robust model (Table~\ref{tab:ood_results}).
Even so, averaged across all \numtotal{} datasets \calens{} do well and get \calaccid{}\% ID (vs. \stdaccid{}\% for the standard model, \robaccid{}\% for the robust model) and \calaccood{}\% OOD (vs. \stdaccood{}\% for the standard model, \robaccood{}\% for the robust model).

% In addition, unlike self-training, \calens{} handle both models symmetrically and does not require prior knowledge of which model is better in what domain.
% \ak{There is a slight gap with the theory---we predicted ensembles }

% \textbf{Strong ID and OOD accuracy}:
% Calibrating and then ensembling a standard and a robust model, gets the best of both worlds, typically outperforming the standard and robust model both ID (Table~\ref{tab:id_results}) and OOD (Table~\ref{tab:ood_results}).
% Averaged across the datasets, \calens{} get 89.3\% ID (vs 87.9\% for the standard model and 84.7\% for the robust model) and 77.9\% OOD (vs 77.2\% for the robust model and 64.9\% for the standard model).
% The method works across the board---\calens{} achieve the best performance on 5/6 ID datasets, and on 5/6 OOD datasets.
% For the remaining two datasets, DomainNet OOD and CIFAR-10 ID, \calens{} close over 95\% of the gap between the standard and robust model (96\% for DomainNet, 97\% for CIFAR-10). 

\subsection{Ablations}
\label{sec:experiments-how-ensemble}

Our proposed method is a simple combination of a calibrated robust and calibrated standard model.
We vary the components of our method and try: (i) tuned ensembles without calibration, (ii) vanilla ensembles without calibration, and (iii) ensembles of two standard or two robust models.

\textbf{Tuned ensembles do not mitigate tradeoffs.} A natural way to ensemble the two models is ``tuned ensembles'': choosing the ensemble weights to optimize accuracy on the ID validation set. This approach is also known as stacking, and has performed well on the Netflix prize and Kaggle competitions~\citep{sill2009feature}.
% ~\citet{miller2021line} report that OOD accuracy can be quite correlated with ID accuracy, which suggests that this method might do well OOD.\ak{this line might be confusing}
Interestingly, we find that tuned ensembles do not do very well OOD, getting an average accuracy of \tunedaccood{}\% across the \numtotal{} datasets (vs. \calaccood{}\% for \calens{}).
The ID accuracies are similar---results for all datasets are in Table~\ref{tab:id_tuned} (ID) and Table~\ref{tab:ood_tuned} (OOD).
% Tuned ensembles do marginally better ID getting an average accuracy of \tunedaccid{}\% (vs. \calaccid{}\% for \calens{}). Naturally, we expect the tuned ensemble to do the best ID since its weights are tailored for ID---what is surprising is that the \calens{} do so much better OOD without using any OOD data either. We show results for all datasets in Table~\ref{tab:id_tuned} (ID) and Table~\ref{tab:ood_tuned} (OOD). 

\textbf{Calibration helps.} \calens{} (calibration is only done on ID data) outperform vanilla ensembles, especially on OOD test examples. \calens{} get an average ID accuracy of \calaccid{}\% (vs. \naiveaccid{}\% \pl{is this statistically signicant?}\ak{For OOD, yes. The longer answer is that per-dataset results are in Table 5. As usual, we include confidence intervals for the 7 datasets where we have multiple runs (e.g., didn't run multiply runs on ImageNet). It is statistically significantly better on 3 of the 7 and the same on 4. On the reamining 7 where we don't have confidence intervals, it does much better on 4, about the same on 2, and very slightly worse on 1.} for vanilla ensembles) and an average OOD accuracy of \calaccood{}\% (vs. \naiveaccood{}\% for vanilla ensembles). We show results for all datasets in Table~\ref{tab:id_tuned} (ID) and Table~\ref{tab:ood_tuned} (OOD).

\textbf{Outperforms standard and robust ensembles.}
As a sanity check, Appendix~\ref{sec:per-dataset-ensemble-ablations} shows that our method outperforms 1. ensembling two (calibrated) standard models, and 2. ensembling two (calibrated) robust models.
% For these ensembles, calibration does not affect the accuracy much.

\textbf{Models are miscalibrated OOD.}
Even after ID calibration, we find that the standard and robust models are not calibrated OOD, which matches prior work~\citep{ovadia2019uncertainty}.
We estimate the expected calibration error (ECE; Equation 2 in~\citet{guo2017calibration}).
Since we calibrated on ID data, the ECE is low ID (\stdeceid\% for the standard model, \robeceid\% for the robust model; Table~\ref{tab:id_ece}).
However, the ECE is high OOD (\stdeceood\% for the standard model, \robeceood\% for the robust model; Table~\ref{tab:ood_ece})
% ---this agrees with prior work~\citep{ovadia2019uncertainty} shows that models calibrated ID are still not calibrated OOD.
Appendix~\ref{sec:per-dataset-calibration-appendix} shows that even the relative confidence of the models can be wrong: the standard model can be \emph{more confident} but \emph{less accurate} OOD, after ID-calibration.
Nonetheless, \calens{} get the best of both worlds---see Section~\ref{sec:analysis} for some simple intuitions for why this can happen.

\section{Related Works and Discussion}

\paragraph{Calibration.} Calibration has been widely studied in machine learning~\citep{naeini2014binary, guo2017calibration, kumar2019calibration}, and applications such as meteorology~\citep{murphy1973vector,degroot1983forecasters,gneiting2005weather}, fairness~\citep{johnson2018multicalibration}, and healthcare~\citep{jiang2012calibrating}. Many of these works focus on the in-distribution (ID) setting, where models are calibrated on the same distribution that they are evaluated on.~\citet{ovadia2019uncertainty,jones2021selective} show that if we calibrate (e.g., via temperature scaling) a model ID, it still has poor uncertainties OOD.
% ~\citet{} also show that model uncertainties can be quite unreliable out-of-distribution.
However, we show that despite having poor uncertainties on traditional metrics, calibrated models can be combined effectively to mitigate ID-OOD tradeoffs.
~\citet{wald2021calibration} show that if a model is calibrated on many domains (domains $>$ no. of features) in a linear setting, then the model is calibrated (and invariant) on new domains. A key difference is that they require a large number of training domains, which may need to be annotated to ensure calibration across them, while we only require access to a single domain.

\paragraph{Ensembling.}
Ensembling models is a common way to get an accuracy boost---typically the ensemble members are trained with a different random seed~\citep{lakshminarayanan2017simple} or augmentation~\citep{stickland2020diverse}.
In the setting where the ensemble members mostly differ by random seeds or augmentations, prior work has shown that calibrating the members of an ensemble does not help~\citep{wu2021ensemble,ovadia2019uncertainty}.
% Indeed, we find that calibration has minimal effect when we ensemble two standard, or two robust models, that are trained from different seeds.
However when we combine two very different models (standard and robust), calibration leads to clear improvements.

\paragraph{Mitigating ID-OOD tradeoffs.} Tradeoffs between ID and OOD accuracy are widely studied and prior work self-trains on large amounts of unlabeled data to mitigate such tradeoffs~\citep{raghunathan2020understanding, xie2021innout, khani2021removing}.
In contrast, our approach uses no extra unlabeled data and is a simple method where we just add up the model probabilities after a quick calibration step.
In concurrent and independent work,~\citep{wortsman2021robust} show that there \emph{exists} a way to combine a CLIP zero-shot and fine-tuned model to get good ID and OOD accuracy---however learning how to combine the models may require OOD data, which is not available. We show that the natural way to learn how to weight ensemble members---selecting the weights to optimize ID accuracy---does not get the best of both worlds.
In addition, their approach does not directly apply to settings where the standard and robust models have different architectures, such as In-N-Out~\citep{xie2021innout}.

\paragraph{Conclusion and Future Work.} 
In this paper, we show that \calens{}, a simple method of calibrating a standard and robust model only on ID data and then ensembling them, can eliminate the tradeoff between in-distribution (ID) and out-of-distribution (OOD) accuracy on a wide range of natural shifts.
We hope that this leads to more widespread use and deployment of robustness interventions.

\calens{} were competitive with prior work that used self-training, despite being simpler and not using additional unlabeled data.
However, self-training may have advantages: we believe self-training may potentially eliminate tradeoffs even in anticorrelated spurious settings---it could be interesting for future work to compare ensembling and self-training theoretically, and see if their benefits are complementary.
Additionally,~\calens{} require twice the compute of a single model (although for fairness, we compared with an ensemble of standard or robust models), while self-training gives us a single model.
One potential future direction is to see if~\calens{} can be distilled into a single model (without additional unlabeled data). Self-training could also be complementary to \calens{}, and give even better accuracies when combined.

\section{Acknowledgements}

This work was in part supported by the Open Philanthropy Project and NSF Award Grant No. 1805310, and NSF IIS 2045685.
AR was supported by an Open Philanthropy Project AI Fellowship.
We would like to thank Robbie Jones and the anonymous reviewers for helpful comments on our draft.

\bibliography{all.bib}
\appendix

\onecolumn
\section{Proofs for Section~\ref{sec:analysis}}
\label{app:analysis_appendix}

We begin with some standard background on Bayes optimal classifiers.
When then prove the results in Section~\ref{sec:analysis}.
By default, expectations are taken over all random variables.

\subsection{Background on Bayes-optimal classifiers}

These results are all standard, but we include it as background information since different texts use different notations.
Let $Z \in \cZ$ denotes some features (that can be complicated functions of the input $x$, for example the output of a neural network), and let $Y \in \cY$ denote the label.
Let $P$ be a distribution over $(Z, Y)$.
The Bayes-optimal classifier predicts the most likely label $y$ given features $z$.

\begin{definition}
\label{dfn:bayes-opt-appendix}
The Bayes-optimal classifier for $P$ given features $z$ is given by:
\begin{equation}
	y_*(z) = \argmin_{y \in \cY} P(y \mid z).
\end{equation}
\end{definition}

The Bayes-optimal classifier has the minimum misclassification error of all possible classifiers that use $z \in \cZ$ to predict $y \in \cY$.
Formally, the error of a classifier $\hat{y}$ is the probability that it gets the label incorrect.

\begin{definition}
\label{dfn:error-appendix}
The error of a predictor $\hat{y} : \cZ \to \cY$ on distribution $P$ is given by:
\begin{equation}
	\Err_P(\hat{y}) = P(Y \neq \hat{y}(Z)),
\end{equation}
\end{definition}

Alternatively, we can look at the error for each $Z$, and then take the average over $Z$, which gives us:

\begin{lemma}
\label{lem:alt_error_appendix}
The error of a predictor $\hat{y} : \cZ \to \cY$ on distribution $P$ can be written as:
\begin{equation}
	\Err_P(\hat{y}) = \E[1 - P(Y=\hat{y}(Z) \mid Z)].
\end{equation}
\end{lemma}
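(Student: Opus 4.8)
The plan is to unfold the definition of error and apply the tower property of conditional expectation. First I would rewrite the misclassification probability from Definition~\ref{dfn:error-appendix} as the expectation of an indicator: $\Err_P(\hat{y}) = P(Y \neq \hat{y}(Z)) = \E[\1[Y \neq \hat{y}(Z)]]$, where the expectation is taken over $(Z, Y) \sim P$. This just restates that a probability of an event equals the expectation of its indicator.

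Next I would condition on $Z$ and invoke iterated expectation, writing $\Err_P(\hat{y}) = \E\big[\E[\1[Y \neq \hat{y}(Z)] \mid Z]\big]$. The key observation is that $\hat{y}(Z)$ is a deterministic function of $Z$, so inside the inner conditional expectation it is a fixed label; hence $\E[\1[Y \neq \hat{y}(Z)] \mid Z] = P(Y \neq \hat{y}(Z) \mid Z)$. Taking complements gives $P(Y \neq \hat{y}(Z) \mid Z) = 1 - P(Y = \hat{y}(Z) \mid Z)$, and substituting back into the outer expectation yields $\Err_P(\hat{y}) = \E[1 - P(Y = \hat{y}(Z) \mid Z)]$, exactly as claimed.

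There is no genuine obstacle here: the statement is a direct consequence of the law of total expectation, and the two definitions preceding it supply everything needed. The only point requiring a moment of care is that $\hat{y}(Z)$ must be treated as $Z$-measurable, so that it can be held fixed when conditioning on $Z$; this measurability is what lets us collapse the conditional expectation of the indicator into the conditional probability $P(Y = \hat{y}(Z) \mid Z)$, and then into its complement.
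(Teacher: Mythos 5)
Your proposal is correct and follows essentially the same route as the paper's own proof: write the misclassification probability as the expectation of an indicator, apply the law of total expectation conditioning on $Z$, rewrite the inner conditional expectation as $P(Y \neq \hat{y}(Z) \mid Z)$, and take the complement. Your explicit remark that $\hat{y}(Z)$ is $Z$-measurable is a detail the paper leaves implicit, but it is the same argument.
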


\begin{proof}
	We can write the misclassification probability as an expectation over an indicator and then apply the law of total expectation.
	\begin{align}
		P(Y \neq \hat{y}(Z)) &= \E[\mathbb{I}(Y \neq \hat{y}(Z))] \\
		&= \E[\E[\mathbb{I}(Y \neq \hat{y}(Z)) \mid Z]].
	\end{align}
	And then just write the inner expectation as a probability.
	\begin{align}
		\E[\E[\mathbb{I}(Y \neq \hat{y}(Z)) \mid Z]] &= \E[P(Y \neq \hat{y}(Z) \mid Z)] \\
		&= \E[1 - P(Y = \hat{y}(Z) \mid Z)].
	\end{align}
\end{proof}

The Bayes-optimal classifier selects the $y$ with the highest probability given $z$, so we have:

\begin{lemma}
\label{lem:bayes-opt-err-appendix}
The error of the Bayes-optimal classifier $y_*$ on a distributon $P$ can be written as (where $Z \sim P$):
\begin{equation}
	\Err_P(y_*) = \E[1 - \max_{y \in \cY} P(Y=y \mid Z)].
\end{equation}
\end{lemma}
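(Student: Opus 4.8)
The plan is to derive the claimed formula directly from Lemma~\ref{lem:alt_error_appendix}, which already expresses the error of an \emph{arbitrary} predictor $\hat{y}$ as $\Err_P(\hat{y}) = \E[1 - P(Y = \hat{y}(Z) \mid Z)]$. The only additional ingredient is the defining property of $y_*$: for each $z$ it returns the label with the largest posterior probability $P(Y = y \mid z)$, as the sentence immediately preceding the lemma states. (Note that the displayed Definition~\ref{dfn:bayes-opt-appendix} should read $\argmax$ rather than $\argmin$, since the Bayes-optimal rule \emph{maximizes} the posterior; I take $y_*$ to be the maximizer throughout.) Thus the whole argument is a one-line specialization of a result we already have, followed by a single substitution.

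Concretely, first I would instantiate Lemma~\ref{lem:alt_error_appendix} at the choice $\hat{y} = y_*$, which immediately gives $\Err_P(y_*) = \E[1 - P(Y = y_*(Z) \mid Z)]$. Second, I would evaluate the inner posterior at the Bayes rule: because $y_*(Z) \in \argmax_{y \in \cY} P(Y = y \mid Z)$ holds pointwise (for each realization of $Z$), we have the identity $P(Y = y_*(Z) \mid Z) = \max_{y \in \cY} P(Y = y \mid Z)$ almost surely. Substituting this into the previous display yields $\Err_P(y_*) = \E[1 - \max_{y \in \cY} P(Y = y \mid Z)]$, which is exactly the claim.

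There is essentially no hard step here — the statement is a direct corollary of Lemma~\ref{lem:alt_error_appendix}, so the main ``obstacle'' is only to flag two routine points of rigor. First, well-definedness of $y_*$ when the $\argmax$ is attained by more than one label: any measurable tie-breaking rule leaves $P(Y = y_*(Z) \mid Z)$ equal to $\max_y P(Y = y \mid Z)$, so the formula is unaffected by the choice. Second, measurability of the map $z \mapsto \max_{y \in \cY} P(Y = y \mid z)$, which is immediate because $\cY = [K]$ is finite and the maximum is taken over finitely many measurable functions; this guarantees the outer expectation is well-defined and the substitution in the second step is legitimate.
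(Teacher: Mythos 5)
Your proof is correct and takes exactly the paper's route: the paper's own proof is the one-line substitution of the Bayes-optimal rule into Lemma~\ref{lem:alt_error_appendix}, just as you do. You are also right that Definition~\ref{dfn:bayes-opt-appendix} contains a typo ($\argmin$ should be $\argmax$), and your remarks on tie-breaking and measurability are sound but inessential additions.
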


\begin{proof}
	The proof is immediate by substituting the definition of the Bayes-optimal classifier (Definition~\ref{dfn:bayes-opt-appendix}) into the alternative formula for the error in Lemma~\ref{lem:alt_error_appendix}.
\end{proof}

From the above, it is clear that the Bayes-optimal classifier has lower error than any other classifier that uses only $z$, formalized below.

\begin{lemma}
\label{lem:bayes-opt-is-optimal-appendix}
The bayes-optimal classifier (for $P$) has lower error than all classifiers $\hat{y} : \cZ \to \cY$:
\begin{equation}
	\Err_P(y_*) \leq \Err_P(\hat{y}).
\end{equation}
\end{lemma}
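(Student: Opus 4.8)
The plan is to reduce the claim to a pointwise inequality in $Z$ and then integrate. First I would invoke the two error formulas already established: by Lemma~\ref{lem:bayes-opt-err-appendix}, $\Err_P(y_*) = \E[1 - \max_{y \in \cY} P(Y = y \mid Z)]$, and by Lemma~\ref{lem:alt_error_appendix} applied to the arbitrary predictor, $\Err_P(\hat{y}) = \E[1 - P(Y = \hat{y}(Z) \mid Z)]$. This converts the desired inequality between global errors into a comparison of two expectations of the same shape, $\E[1 - (\cdot)]$, where the only difference is which conditional probability appears inside.

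The key step is a pointwise observation. For every $z \in \cZ$, the quantity $\max_{y \in \cY} P(Y = y \mid Z = z)$ is, by definition of the maximum, at least $P(Y = \hat{y}(z) \mid Z = z)$, since $\hat{y}(z)$ is one particular element of $\cY$. Equivalently, the Bayes-optimal label $y_*(z)$ attains this maximum, so it assigns at least as much conditional probability to the label it predicts as any competing rule does. Negating and adding one reverses the inequality, giving $1 - \max_{y \in \cY} P(Y = y \mid Z = z) \leq 1 - P(Y = \hat{y}(z) \mid Z = z)$ for every $z$.

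Finally I would take the expectation over $Z \sim P$ on both sides. Since the inequality holds pointwise and expectation is monotone, it is preserved, yielding $\E[1 - \max_{y \in \cY} P(Y = y \mid Z)] \leq \E[1 - P(Y = \hat{y}(Z) \mid Z)]$, which is exactly $\Err_P(y_*) \leq \Err_P(\hat{y})$ after substituting the two formulas back in.

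I do not expect a genuine obstacle here: the argument is a routine ``the maximizer beats any fixed choice pointwise, then integrate'' computation. The only points that require care are keeping the two error representations straight---applying Lemma~\ref{lem:bayes-opt-err-appendix} to $y_*$ and Lemma~\ref{lem:alt_error_appendix} to the generic $\hat{y}$---and noting that no measurability or integrability issues arise since every integrand lies in $[0,1]$.
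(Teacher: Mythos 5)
Your proof is correct and follows essentially the same route as the paper: both arguments combine Lemma~\ref{lem:alt_error_appendix} and Lemma~\ref{lem:bayes-opt-err-appendix}, reduce the claim to the pointwise fact that $P(Y = \hat{y}(z) \mid Z = z) \leq \max_{y \in \cY} P(Y = y \mid Z = z)$, and conclude by monotonicity of expectation. Your write-up simply makes the pointwise step and the integration step more explicit than the paper's three-line chain of inequalities.
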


\begin{proof}
Beginning from Lemma~\ref{lem:alt_error_appendix}, we have:
\begin{align}
	\Err_P(\hat{y}) &= \E[1 - P(Y=\hat{y}(Z) \mid Z)] \\
	&\geq \E[1 - \max_{y \in \cY} P(Y=y \mid Z)] \\
	&= \Err_P(y_*).
\end{align}
\end{proof}

As a simple corollary, we note that the accuracy of the Bayes-optimal classifier is at least the frequency of the most common label. 

\begin{corollary}
\label{cor:bayes-opt-better-trivial}
If $y_*$ is bayes-optimal for $P$ then,
	\begin{equation}
		\Err_P(y_*) \leq 1 - \max_{y \in \cY} P(Y=y)
	\end{equation}
\end{corollary}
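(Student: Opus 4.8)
The plan is to prove this by comparing the Bayes-optimal classifier against the trivial constant predictor that always outputs the single most frequent label, and then invoke the optimality of the Bayes classifier already established in Lemma~\ref{lem:bayes-opt-is-optimal-appendix}. The underlying intuition is that the quantity $1 - \max_{y \in \cY} P(Y=y)$ is exactly the error of a classifier that ignores the features entirely, so any optimal feature-aware classifier can only do better.

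Concretely, I would first define $y_c = \argmax_{y \in \cY} P(Y=y)$, the globally most common label, and consider the constant predictor $\hat{y}_c : \cZ \to \cY$ with $\hat{y}_c(z) = y_c$ for all $z$. By Definition~\ref{dfn:error-appendix}, its error is
\begin{equation}
\Err_P(\hat{y}_c) = P(Y \neq y_c) = 1 - P(Y = y_c) = 1 - \max_{y \in \cY} P(Y=y),
\end{equation}
since this predictor is wrong precisely when the true label differs from $y_c$. Because $\hat{y}_c$ is a (particularly simple) map $\cZ \to \cY$, Lemma~\ref{lem:bayes-opt-is-optimal-appendix} applied to $\hat{y} = \hat{y}_c$ gives $\Err_P(y_*) \leq \Err_P(\hat{y}_c)$, and substituting the computed error of the constant classifier yields the claim.

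Alternatively, a self-contained argument starts from the closed form in Lemma~\ref{lem:bayes-opt-err-appendix}, lower-bounds the inner maximum $\max_{y \in \cY} P(Y=y \mid Z)$ by $P(Y = y_c \mid Z)$ for the fixed label $y_c$, and applies the law of total expectation, $\E[P(Y = y_c \mid Z)] = P(Y = y_c)$, to reach the same bound. There is essentially no real obstacle: the only step requiring any care is recognizing that the feature-independent constant classifier attains error exactly $1 - \max_{y} P(Y=y)$, after which the corollary is immediate from the optimality lemma.
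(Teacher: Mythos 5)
Your proof is correct and matches the paper's intended argument: the paper presents this as an immediate consequence of Lemma~\ref{lem:bayes-opt-is-optimal-appendix}, with exactly the comparison you make against the constant classifier that always predicts the most common label, whose error is $1 - \max_{y \in \cY} P(Y=y)$. Your alternative route via Lemma~\ref{lem:bayes-opt-err-appendix} and the law of total expectation is also valid, but the first argument is the one the paper relies on.
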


So for example if $P$ is balanced, then the Bayes-opt classifier will have accuracy at least $1/K$, where $K$ is the number of classes.

Note that calibrated classifiers are Bayes-optimal given their outputs.
Formally, let $P$ be a distribution over $(x, y)$, and suppose $f$ is calibrated with respect to $P$.
Let $z = f(x)$ and let $P'$ be the induced distribution over $(z, y)$.
Then $f$ is Bayes-optimal for $P'$ given features $z$.
The label distributions $P'(y)$ and $P(y)$ are the same, so Lemma~\ref{lem:bayes-opt-is-optimal-appendix} applies to any calibrated classifier.

\subsection{Proof of Proposition~\ref{prop:calibration-ensemble-optimal}}

\newtheorem*{calibrationEnsembleOptimalProp}{Restatement of Proposition~\ref{prop:calibration-ensemble-optimal}}

\begin{calibrationEnsembleOptimalProp}
\calibrationEnsembleOptimalText
\end{calibrationEnsembleOptimalProp}

We first show that in the setting of the Proposition, we can write $P(y \mid \frob(x), \frob(x))$ in terms of $\frob(x)$ and $\fstd(x)$.

\begin{lemma}
	\label{lem:bayes_prob_softmax}
	In the setting of Proposition~\ref{prop:calibration-ensemble-optimal}, let $m \in \R^K$ be the log of the marginal probabilities $P(y)$:
	\begin{equation}
		\label{eqn:general_combination_lem_bayes_prob_softmax}
		m_y = \log{P(y)}, \quad \mbox{for all }y \in [K].
	\end{equation}
	Then we have:
	\begin{equation}
		\label{eqn:bayes_prob_softmax_appendix_imbalanced}
		P(y \mid \fstd(x), \frob(x)) = \softmax(\fstd(x) + \frob(x) - m)_y, \quad \mbox{for all }y \in [K].
	\end{equation}
	In the balanced setting, where $P(y) = 1/K$ for all $y$, this simplifies to:
	\begin{equation}
		\label{eqn:bayes_prob_softmax_appendix_balanced}
		P(y \mid \fstd(x), \frob(x)) = \softmax(\fstd(x) + \frob(x))_y, \quad \mbox{for all }y \in [K].
	\end{equation}
\end{lemma}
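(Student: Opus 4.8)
The plan is to compute the posterior $P(y \mid \fstd(x), \frob(x))$ directly via Bayes' rule, using the conditional independence assumption to factor the joint likelihood and the calibration hypotheses to replace each per-model posterior with a softmax. Writing $s = \fstd(x)$ and $r = \frob(x)$ for the two model outputs (the ``features'' $Z$ from the background subsection), I would first apply Bayes' rule to obtain
\begin{equation}
P(y \mid s, r) = \frac{P(s, r \mid y)\, P(y)}{P(s, r)}.
\end{equation}
The denominator $P(s,r)$ does not depend on $y$, so I would treat it as a normalization constant and track only the $y$-dependent numerator.

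Next I would invoke the conditional independence assumption $\frob(x) \perp \fstd(x) \mid y$ to factor the likelihood as $P(s, r \mid y) = P(s \mid y)\, P(r \mid y)$. Applying Bayes' rule a second time to each factor gives $P(s \mid y) = P(y \mid s) P(s)/P(y)$ and $P(r \mid y) = P(y \mid r) P(r)/P(y)$. Substituting these back and cancelling one factor of $P(y)$ yields
\begin{equation}
P(y \mid s, r) = \frac{P(y \mid s)\, P(y \mid r)}{P(y)} \cdot \frac{P(s) P(r)}{P(s, r)},
\end{equation}
where the final fraction is again independent of $y$ and folds into the normalizer.

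Now I would use the calibration hypotheses $P(y \mid s) = \softmax(s)_y$ and $P(y \mid r) = \softmax(r)_y$. Since each softmax denominator ($\sum_j e^{s_j}$ and $\sum_j e^{r_j}$) is $y$-independent, everything except the numerators collapses into a single normalizing constant, giving $P(y \mid s, r) \propto e^{s_y + r_y - \log P(y)} = e^{s_y + r_y - m_y}$. Because the left-hand side sums to one over $y$, the proportionality is in fact an equality after softmax normalization, which establishes $P(y \mid s, r) = \softmax(s + r - m)_y$. For the balanced case $P(y) = 1/K$, the vector $m$ is constant across its coordinates, and since softmax is invariant to adding a constant to all of its inputs, the $-m$ term drops out and we recover $\softmax(s + r)_y$.

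The main thing to be careful about — rather than a genuine obstacle — is bookkeeping: making sure that every factor absorbed into the normalization constant truly has no $y$-dependence (in particular $P(s, r)$, $P(s)$, $P(r)$, and the two softmax denominators), and that the calibration assumption is being applied to the conditioning event $f(x) = s$ (conditioning on the model's \emph{output} value) rather than on $x$ itself. Once the $y$-dependent and $y$-independent pieces are separated cleanly, recognizing the softmax form and applying its shift-invariance for the balanced reduction is immediate.
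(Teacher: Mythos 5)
Your proposal is correct and follows essentially the same route as the paper's own proof: Bayes' rule on the joint, factoring via conditional independence, a second application of Bayes' rule, substitution of the calibration hypotheses, and absorption of all $y$-independent factors into a single normalizing constant recognized as the softmax denominator. The only cosmetic difference is that for the balanced case you invoke the shift-invariance of softmax while the paper folds the constant $P(y)=1/K$ directly into the normalizer; these are the same observation.
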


\begin{proof}
	Fix $r = \frob(x)$ and $s = \fstd(x)$, where $r, s \in \R^K$.
	We first rewrite the probability of $y$ given the robust and standard model outputs $P(y \mid r, s)$ in terms of the probability of $y$ given each of the individual model outputs: $P(y \mid r)$ and $P(y \mid s)$.
	We do this for discrete random variables for simplicity, but the same result follows by using Bayes rule for general random variables.
	\begin{align}
		P(y \mid r, s) &= \frac{P(r, s \mid y) P(y)}{P(r, s)} && \text{[Bayes rule]} \\
		&= \frac{P(r \mid y)P(s \mid y) P(y)}{P(r, s)} && \text{[$r \perp s \mid y$]} \\
		&= \frac{[\frac{P(y \mid r) P(r)}{P(y)} \frac{P(y \mid s) P(s)}{P(y)} P(y)}{P(r, s)} && \text{[Bayes rule]} \\
		&= \frac{P(y \mid r) P(y \mid s)}{P(y)} \Big[ \frac{P(r) P(s)}{P(r, s)} \Big] && \text{[Algebra]} \\
	\end{align}
	Since $r, s$ are fixed, we can denote the terms that do not depend on $y$ by a constant $c_1$,
	\begin{equation}
		c_1 = \frac{P(r) P(s)}{P(r, s)}.
	\end{equation}
	So then we can write:
	\begin{equation}
		\label{eqn:simplified_bayes_prob_0_appendix}
		P(y \mid r, s) = \frac{P(y \mid r) P(y \mid s)}{P(y)} c_1, \quad \mbox{for all $y \in [K]$}.
	\end{equation}
	Now, we assumed $P(Y=y \mid r) = \softmax(r)_y$ and $P(Y=y \mid s) = \softmax(s)_y$ for all $y \in [K]$. 
	For some constants $c_2, c_3 \in \R$, we can write this as: $P(Y=y \mid r) = \exp(r_y) / c_2$ and $P(Y=y \mid s) = \exp(s_y) / c_3$ for all $y \in [K]$.
	Substituting this into Equation~\ref{eqn:simplified_bayes_prob_0_appendix}, we get:
	\begin{equation}
		P(y \mid r, s) = \frac{\exp(r_y + s_y)}{P(y)} \frac{c_1}{c_2 c_3}, \quad \mbox{for all $y \in [K]$}.
	\end{equation}
	Writing $1/P(y)$ as $\exp(-\log{P(y)})$, and setting $c_4 = \frac{c_1}{c_2 c_3}$, this gives us:
	\begin{equation}
		P(y \mid r, s) = c_4 \exp(r_y + s_y - \log{P(y)}), \quad \mbox{for all $y \in [K]$}.
	\end{equation}
	Since the LHS is a probability, these must sum to $1$ and so $c_4$ must be a normalizing constant, that is, $c_4 = 1 / (\sum_{y \in [K]} \exp(r_y + s_y - \log{P(y)}))$.
	This gives us:
	 \begin{equation}
		P(y \mid r, s) = \softmax(r + s - m)_y, \quad \mbox{for all $y \in [K]$},
	\end{equation}
	which is precisely Equation~\ref{eqn:bayes_prob_softmax_appendix_imbalanced}.
	In the balanced setting, we have $P(Y) = 1/K$ so we simply fold $P(Y)$ into the constant $c_4$, and get:
	\begin{equation}
		P(y \mid r, s) = \softmax(r + s)_y, \quad \mbox{for all $y \in [K]$},
	\end{equation}
	which is precisely Equation~\ref{eqn:bayes_prob_softmax_appendix_balanced}.
\end{proof}

Now we are ready to prove Proposition~\ref{prop:calibration-ensemble-optimal}.

\begin{proof}[Proof of Proposition~\ref{prop:calibration-ensemble-optimal}]
	We assumed the ``balanced'' setting where $P(y) = 1/K$ for all $y$.
	From Lemma~\ref{lem:bayes_prob_softmax}, letting $\fens(x) = \fstd(x) + \frob(x)$, we have:
	\begin{equation}
		\label{eqn:first_in_proof_calibration-ensemble-optimal}
		P(y \mid \fstd(x), \frob(x)) = \softmax(\fens(x))_y,
	\end{equation}
	So this means that the ensemble prediction is the Bayes optimal given $(\fstd(x), \frob(x))$:
	\begin{equation}
		\pred(\fens(x)) = \argmax_y \fens(x)_y = \argmax_y \softmax(\fens(x))_y = \argmax_y P(y \mid \fstd(x), \frob(x)).
	\end{equation}
	But then from Lemma~\ref{lem:bayes-opt-is-optimal-appendix}, any other predictor which uses only $(\frob(x), \fstd(x))$ must have higher error.
	This completes the proof.

	Note that the inequality in the above proof is a strict inequality except in degenerate cases: as long as $\fstd$ and $\frob$ sometimes disagree in their predictions, and in some of these cases $\fstd$ assigns a higher probability to its predictions, and in some cases $\frob$ assigns a higher probability to its prediction, the inequalities will be strict inequalities.
\end{proof}

\subsection{Proof of Proposition~\ref{prop:supp_missing_ens_works}}

\newtheorem*{suppMissingEnsWorksProp}{Restatement of Proposition~\ref{prop:supp_missing_ens_works}}

\begin{suppMissingEnsWorksProp}
\suppMissingEnsWorksText{}
\end{suppMissingEnsWorksProp}

\begin{proof}
	We first note that errors are additive. That is, letting:
	\begin{equation}
		\Err(P, f) = \E[\pred(f(x)) \neq y] \mbox{, where }x, y \sim P,
	\end{equation}
	we have:
	\begin{equation}
		\Err(\alpha P_{\tau} + (1 - \alpha) P_0, f) = \alpha \Err(P_{\tau}, f) + (1 - \alpha) \Err(P_0, f)
	\end{equation}
	So it suffices to prove that the ensemble is better than the standard and robust models for $P_{\tau}$ and $P_0$ separately.

	\textbf{Suppressed features.}
	Let $\overline{\frob}(x) = \tau \frob(x)$ and $\overline{\fstd}(x) = \tau \fstd(x)$ be scaled versions of the standard and robust models.
	Definition~\ref{dfn:suppressed_spurious} implies that $\overline{\frob}$ and $\overline{\fstd}$ are calibrated.
	Since we assumed $P_{\tau}$ is balanced, by Proposition~\ref{prop:calibration-ensemble-optimal}, $\overline{\fens}$ given by $\overline{\fens(x)} = \tau \frob(x) + \tau \fstd(x)$ has optimal error on $P_{\tau}$.
	But for all $x$, the predictions of $\fens$ and $\overline{\fens}$ are the same (multiplying the outputs of a model by a constant does not change the predicted output, which is the $\argmax$).
	So $\fens$ also has optimal error on $P_{\tau}$:
	\begin{equation}
		\Err(P_{\tau}, \fens) \leq \Err(P_{\tau}, \fstd) \mbox{, and } \Err(P_{\tau}, \fens) \leq \Err(P_{\tau}, \frob)
	\end{equation}
	Note that these inequalities are strict inequalities except in degenerate cases: as long as $\fstd$ and $\frob$ sometimes disagree in their predictions, and in some of these cases $\fstd$ assigns a higher probability to its predictions, and in some cases $\frob$ assigns a higher probability to its prediction, the inequalities will be strict inequalities.

	\textbf{Missing spurious.}
	If $\fstd(x) = 0$ almost surely, then $\fens(x) = \frob(x) + \fstd(x) = \frob(x)$ almost surely.
	Furthermore, if $\fstd(x) = 0$ then its error is lower bounded by $1 - \max_y P_0(y)$.
	On the other hand, $\frob(x)$ is calibrated and therefore Bayes-optimal given $z = \frob(x)$ so from Lemma~\ref{cor:bayes-opt-better-trivial} (e.g., see the the discussion below the Lemma for more details) has error at most $1 - \max_y P_0(y)$.
	So we have:
	\begin{equation}
		\Err(P_0, \fens) = \Err(P_0, \frob) \leq \Err(P_0, \fstd)
	\end{equation}
	Note that the inequality is a strict inequality except in a degenerate case (where the probability that $\frob$ predicts for the most common class $\argmax_y P_0(y)$ is the same for all inputs).
\end{proof}

\subsection{Proof of Proposition~\ref{prop:anti_correlated_ens_fails}}

\newtheorem*{antiCorrelatedEnsFailsProp}{Restatement of Proposition~\ref{prop:anti_correlated_ens_fails}}

\begin{antiCorrelatedEnsFailsProp}
\antiCorrelatedEnsFailsText{}
\end{antiCorrelatedEnsFailsProp}

\begin{proof}
	Let $X, Y \sim \Pood$, and let $Z = (\fstd(X), \frob(X))$ be the predictions of the standard and robust models.
	Fix $z = (\fstd(x), \frob(x))$, and let $s = \fstd(x)$ and $r = \frob(x)$.
	We will analyze the errors for fixed $Z = z$ (showing that the robust model is better than the ensemble, which is better than the standard model).
	Since this is true for all $z$, we then use Lemma~\ref{lem:alt_error_appendix} (which is basically the law of total expectation), to get the desired result.

	\textbf{Bayes-opt classifier.}
	Recall that for some $\alpha, \beta > 0$, we have $\Padv(Y = y | \fstd(x)) = \softmax(-\beta \fstd(x))_y$ for all $x$ (note the minus sign), while $\Padv(Y = y \mid \frob(x)) = \softmax(\alpha \frob(x))_y$.
	Then, applying Lemma~\ref{lem:bayes_prob_softmax}, we have:
	\begin{equation}
		\Padv(y \mid (\fstd(x), \frob(x))) = \softmax(\alpha \frob(x) - \beta \fstd(x))_y.
	\end{equation}
	Rewriting this in terms of $z, r, s$, we have:
	\begin{equation}
		\label{eqn:bayes-opt-adv-spur}
		\Padv(y \mid z) = \softmax(\alpha r - \beta s)_y.
	\end{equation}

	% Latex macro note:
	% We use \jens, \jstd, \jrob to denote the indices selected by the ensemble, standard, and robust model
	\textbf{Ensemble vs. robust classifier.}
	Let $\jrob = \argmax_y r_y$ be the robust model's prediction, and $\jens = \argmax_y (r+s)_y$ be the ensemble model's prediction.
	Because $\jrob$ is the $\argmax$ of $r$, we have:
	\begin{equation}
		\label{eqn:rjrobvsjens_appendix}
		r_{\jrob} \geq r_{\jens}.
	\end{equation}
	Because $\jens$ is the $\argmax$ of $r+s$, we have:
	\begin{equation}
		\label{eqn:ensjrobvsjens_appendix}
		r_{\jens} + s_{\jens} \geq r_{\jrob} + s_{\jrob}.
	\end{equation}
	Taking the negation of this, we get:
	\begin{equation}
		\label{eqn:ensjrobvsjens_negation_appendix}
		-r_{\jrob} - s_{\jrob} \geq -r_{\jens} - s_{\jens}.
	\end{equation}
	Adding $\beta$ times Inequality~\ref{eqn:ensjrobvsjens_negation_appendix} to $(\alpha + \beta)$ times Inequality~\ref{eqn:rjrobvsjens_appendix}, we get:
	\begin{equation}
		\alpha r_{\jrob} - \beta s_{\jrob} \geq \alpha r_{\jens} - \beta s_{\jens}.
	\end{equation}
	Since $\softmax$ is monotonic, we have:
	\begin{equation}
		\softmax(\alpha r - \beta s)_{\jrob} \geq \softmax(\alpha r - \beta s)_{\jens}.
	\end{equation}
	But from Equation~\ref{eqn:bayes-opt-adv-spur} the LHS is the same as the robust model's probability of getting the label correct, and the RHS is the same as the ensemble's probability of getting the label correct:
	\begin{equation}
		\Padv(Y = \jrob \mid Z=z) \geq \Padv(Y = \jens \mid Z=z).
	\end{equation}
	Taking negations (to get the error), and then the expectation over $Z=z$, we get (note that below we write the error, which is why the sign is now flipped):
	\begin{equation}
		\Errood(\fens) \geq \Errood(\frob).
	\end{equation}
	Which is what we wanted to show.
	
	\textbf{Ensemble vs. standard classifier.}
	The argument is fairly analogous to the previous case, with some minor differences in the algebra in the first part.
	Let $\jstd = \argmax_y s_y$ be the standard model's prediction.
	Because $\jstd$ is the $\argmax$ of $s$, we have:
	\begin{equation}
		\label{eqn:sjstdvsjens_appendix}
		s_{\jstd} \geq s_{\jens}.
	\end{equation}
	Taking the negation of this, we get:
	\begin{equation}
		\label{eqn:sjstdvsjens_negation_appendix}
		-s_{\jens} \geq -s_{\jstd}.
	\end{equation}
	Because $\jens$ is the $\argmax$ of $r+s$, we have:
	\begin{equation}
		\label{eqn:ensjstdvsjens_appendix}
		r_{\jens} + s_{\jens} \geq r_{\jstd} + s_{\jstd}.
	\end{equation}
	Adding $\alpha$ times Inequality~\ref{eqn:ensjstdvsjens_appendix} with $(\alpha + \beta)$ times Inequality~\ref{eqn:sjstdvsjens_negation_appendix}, we get:
	\begin{equation}
		\alpha r_{\jens} - \beta s_{\jens} \geq \alpha r_{\jstd} - \beta s_{\jstd}.
	\end{equation}
	The rest of this step is the same as in the comparison between the ensemble and the robust model.
	Since $\softmax$ is monotonic, we have:
	\begin{equation}
		\softmax(\alpha r - \beta s)_{\jens} \geq \softmax(\alpha r - \beta s)_{\jstd}.
	\end{equation}
	But from Equation~\ref{eqn:bayes-opt-adv-spur} the LHS is the same as the robust model's probability of getting the label correct, and the RHS is the same as the ensemble's probability of getting the label correct:
	\begin{equation}
		\Padv(Y = \jens \mid Z=z) \geq \Padv(Y = \jstd \mid Z=z).
	\end{equation}
	Taking negations (to get the error), and then the expectation over $Z=z$, we get (note that below we write the error, which is why the sign is now flipped):
	\begin{equation}
		\Errood(\fstd) \geq \Errood(\fens).
	\end{equation}
	Which is what we wanted to show.

	% Combining these two inequalities (i.e., take the negative for Inequality~\ref{eqn:rjrobvsjens_appendix} to get $-r_{\jens} \geq -r_{\jrob}$ and then add that to Inequality~\ref{eqn:ensjrobvsjens_appendix}), we get:
	% \begin{equation}
	% 	\label{eqn:sjrobvsjens_appendix}
	% 	s_{\jens} \geq s_{\jrob}.
	% \end{equation}
	% Negating Inequality~\ref{eqn:sjrobvsjens_appendix} (to get $-s_{\jrob} \geq -s_{\jens}$) and adding to Inequality

	% Let's consider a more general case where $\Pood$ is balanced and satisfies the following conditions, for all $y \in [K]$:
	% \begin{align}
	% 	P(y \mid \fstd(x)) &= \softmax(\alpha \fstd(x))_y \\
	% 	P(y \mid \rob(x)) &= \softmax(\beta \rob(x))_y
	% \end{align}
	% Then from Lemma~\ref{lem:bayes_prob_softmax}, we have for all $y \in [K]$:
	% \begin{equation}
	% 	P(y \mid \fstd(x), \frob(x)) = \softmax(\fstd(x) + \frob(x))_y
	% \end{equation}

	% Let $X, Y \sim P$ and let $Z = (\fstd(X), \fstd(Y))$.
	% Fix some $z$. We define the accuracy conditioned on $z$ as:
	% % The predictions of the standard, robust, ensemble, can be written as a function of z
	% % I.e., acc(z, P, yhat) P(Y = \yhat(z) | Z=z)
	% % Can define accuracy of a model
	% % That's just acc(z, P, pred(u)
	% % Suppose standard model predicts a, robust model b, ensemble c
	% % Make it clear this is shorthand
	% % We get some inequalities
	% % We can show acc(z, P, c) >= acc(z, P, a) and
	% % acc(z, P, c) >= acc(z, P, b)
	% % Now need to link this to overall error. The point is that a, b, c are implicitly functions of z. So now can take expectation
\end{proof}

\paragraph{Dealing with class imbalance.}
Lemma~\ref{lem:bayes_prob_softmax}, Equation~\ref{eqn:general_combination_lem_bayes_prob_softmax} shows how to combine models in general, if the class-balanced assumption does not hold. Note the additional ``$-m$'' term. Here, the (marginal) probability of each class is defined in Equation~\ref{eqn:general_combination_lem_bayes_prob_softmax}.

(ID Analysis) Then, the ``Proof of Proposition~\ref{prop:calibration-ensemble-optimal}'' is identical for the general case, we just need to set $\fens(x) = \fstd(x) + \frob(x) - m$ on the first line. Equation~\ref{eqn:first_in_proof_calibration-ensemble-optimal} then follows from Lemma~\ref{lem:bayes_prob_softmax}, and the rest of the proof is identical.

(OOD Analysis) The OOD results, Proposition~\ref{prop:supp_missing_ens_works} and~\ref{prop:anti_correlated_ens_fails}, follow if the class marginal distributions match up between ID and OOD, so $\Pid(Y=y) = \Pood(Y=y)$. If the distribution over classes changes substantially, then ensembles can possibly do worse than the robust model.

% \subsection{Proof of Example~\ref{ex:imbalance-ensemble-fails}}

% \newtheorem*{imbalancedEnsembleFailsExample}{Restatement of Example~\ref{ex:imbalance-ensemble-fails}}

% \begin{imbalancedEnsembleFailsExample}
% \imbalancedEnsembleFailsText{}
% \end{imbalancedEnsembleFailsExample}
\section{More information on experiments}

\subsection{Additional details on datasets}
\label{sec:more-info-datasets-appendix}

Here we describe the robustness interventions and datasets in more detail.

\textbf{Robustness interventions}:
\begin{enumerate}
	\item In-N-Out~\citep{xie2021innout}. Many datasets contain a core input $x$ (image or time series data), and metadata $z$ (e.g., location or climate data).~\citet{xie2021innout} show that using the metadata (in addition to $x$) improves accuracy in-distribution (ID), but hurts accuracy out-of-distribution.~\citet{xie2021innout} consider a standard model that takes in both the core inputs and metadata to predict the target, and a robust model that only takes in the core inputs and does some additional pretraining. We use official checkpoints from their CodaLab worksheet \url{https://worksheets.codalab.org/worksheets/0x2613c72d4f3f4fbb94e0a32c17ce5fb0}, and compare to the results tagged as ``In-N-Out'' on each dataset. They also show results after doing additional self-training on (unlabeled) OOD data, but we do not compare to this because 1. OOD data is assumed to be unavailable in our setting, and 2. if OOD unlabeled data is available, we can also start from \calens{} and do additional self-training.
% They call these the `aux-in' and `aux-out' models respectively.
	\item Lightweight fine-tuning~\citep{kumar2022finetuning}: When adapting a pretrained model to an ID dataset, typically all the model parameters are fine-tuned. Recent works show that tuning only parts of the model can often do better OOD even though the ID performance is worse~\citep{li2021prefix,houlsby2019parameter}. On four distribution shift datasets, we take checkpoints from~\citet{kumar2022finetuning} where the standard model starts from a pretrained initialization and fine-tunes all parameters on an ID dataset, and the robust model only learns the top linear `head' layer.
	\item Zero-shot language prompting:~\citet{radford2021clip} pretrain a model on a large multi-modal language and vision dataset. The model can then predict the label of an image by comparing the image embedding, with the language embedding for prompts such as `photo of an apple' or `photo of a banana'. They show that this zero-shot language prompting approach (robust model) can be much more accurate OOD than the traditional method of fine-tuning the entire model (standard model), although ID accuracy of the robust model is worse. We use model checkpoints and datasets from~\citet{radford2021clip}.
	\item Group distributionally robust optimization (DRO)~\citep{sagawa2020group}: Standard ERM models often latch on to spurious correlations in a dataset, such as image background color, or the occurrence of certain words in a sentence. Group DRO essentially upweights examples where this spurious correlation is not present.
	The original formulation in~\citet{sagawa2020group} assumes the spurious correlations are annotated, but newer variants~\citep{liu2021jtt} can work even without these annotations.
	\item CORAL~\citep{sun2016deep} aims to align feature representations across different domains, by penalizing differences in the means and covariances
	of the feature distributions.
	The hope is that this generalizes better to OOD domains.
 \end{enumerate}

We consider three types of \natshifts{} (geography shifts, subpopulation shifts, style shifts), and we also consider adversarial spurious shifts.

\textbf{Geography shifts.} In geography shifts the ID data comes from some locations, and the OOD data comes from a different set of locations. One motivation is that in many developing areas training data may be unavailable because of monetary constraints~\citep{jean2016combining}.
\begin{enumerate}
	\item \textbf{LandCover}~\citep{russwurm2020meta}: The goal is to classify a satellite time-series into one of 6 land types (e.g., "grassland", "savannas"). The ID data contains time-series from outside Africa, and the OOD data consists of time-series from Africa. We take model checkpoints from~\citet{xie2021innout} where they use the In-N-Out intervention---the core feature $x$ is time series data measured by Nasa's MODIS satellite~\citep{modis2015landcover}, and the spurious metadata $z$ consists of climate data (e.g., temperature) at that location. We use the ID and OOD dataset splits defined by~\citet{xie2021innout}.
	\item \textbf{Cropland}~\citep{wang2020weakly}: The goal is to predict whether a satellite image is of a cropland or not. The ID dataset contains images from Iowa, Missouri, and Illinois, and the OOD dataset contains images from Indiana and Kentucky. We take model checkpoints from~\citet{xie2021innout} where they use the In-N-Out intervention---the core feature $x$ is an RGB satellite image, and the spurious metadata $z$ consists of location coordinates and vegetation bands. We use the ID and OOD dataset splits defined by~\citet{xie2021innout}.
	\item \textbf{iWildCam}~\citep{beery2020iwildcam,koh2021wilds}: The goal is to classify the species of an animal given a photo taken by a camera placed in the wild (e.g., in a forest). The ID dataset consists of photos taken by over 200 cameras, and the OOD dataset consists of photos taken by held-out cameras. We use the splits by~\citet{koh2021wilds}. We take model checkpoints from~\citet{koh2021wilds}, where the standard model is trained via standard empirical risk minimization (ERM), and the robust model is trained via CORAL. The model checkpoints were taken from \url{https://worksheets.codalab.org/worksheets/0x036017edb3c74b0692831fadfe8cbf1b}.
\end{enumerate}
\ak{Optionally move the discussion of what's core and spurious to the Appendix}

\textbf{Subpopulation shifts.} In subpopulation shifts, the ID data contains a few sub-categories (e.g., black bear and sloth bear), and the OOD data contains different sub-categories (e.g., brown bears and polar bears) or the same parent category (e.g., bears). For both datasets below, we take model checkpoints from~\citet{kumar2022finetuning} where they use the lightweight fine-tuning intervention, starting from a MoCo-v2 ResNet-50 model pretrained on unlabeled ImageNet images. The datasets are from~\citet{santurkar2020breeds}.
\begin{enumerate}
	\item \textbf{Living-17}~\citep{santurkar2020breeds}: the goal is to classify an image as one of 17 animal categories such as ``bear'' - the ID dataset contains images of black bears and sloth bears and the OOD dataset has images of brown bears and polar bears. 
	\item \textbf{Entity-30}~\citep{santurkar2020breeds}: similar to Living-17, except the goal is to classify an image as one of 30 entity categories such as ``food'', ``motor vehicle'', and ``index''.
\end{enumerate}
\ak{Optionally move the discussion of what checkpoints we used to the Appendix.}

\textbf{Style shifts.} In style shifts, the ID data contains data in a certain style (e.g., sketches), and the OOD data contains data in a different style (e.g., real photos, renditions). 
\begin{enumerate}
	\item \textbf{DomainNet}~\citep{peng2019moment}: a standard domain adaptation dataset. Here, our ID dataset contains ``sketch'' images (e.g., drawings of apples, elephants, etc), and the OOD dataset contains ``real'' photos of the same categories. We take model checkpoints from~\citet{kumar2022finetuning} where they use the lightweight fine-tuning intervention, starting from a CLIP ResNet-50 model.
	\item \textbf{CelebA}~\citep{liu2015deep}: the goal is to classify a portrait of a face as ``male'' or ``female'' - the ID dataset contains images of people without hats, and the OOD dataset contains images of people wearing hats (some facial features might be ``suppressed'' or ``missing'' with hats). We take model checkpoints from~\citet{xie2021innout} where they use the In-N-Out intervention---the core feature $x$ is the RGB image, and the spurious metadata $z$ consists of 7 attribute tags annotated in the dataset (e.g., presence of makeup, beard).
	\item \textbf{CIFAR->STL}: standard domain adaptation dataset~\citep{french2018selfensembling}, where the ID is CIFAR-10~\citep{krizhevsky2009learningmultiple}, and the OOD is STL~\citep{coates2011stl10}. The task is to classify an image into one of 10 categories such as ``dog'', ``cat'', or ``airplane''. We take model checkpoints from~\citet{kumar2022finetuning} where they use the lightweight fine-tuning intervention, starting from a MoCo-v2 ResNet-50 model pretrained on unlabeled ImageNet images. Note that STL has one class (monkey) that is not present in CIFAR-10, so we omit this class when testing on STL.
	\item \textbf{ImageNet}~\citep{russakovsky2015imagenet}: a large scale dataset where the goal is to classify an image into one of 1000 categories. We use the zero-shot language prompting intervention using a CLIP ViT-B/16 vision transformer model. We evaluate on 3 standard OOD datasets: \textbf{ImageNetV2}~\citep{recht2019doimagenet},\textbf{ImageNet-R}~\citep{hendrycks2020many}, and \textbf{ImageNet-Sketch}~\citep{wang2019learningrobust}. \ak{The zero-shot model checkpoint is taken from CLIP paper, and the fine-tuned model checkpoint is taken from LP-FT paper}
\end{enumerate}

\textbf{Adversarial spurious shifts.} In adversarial spurious shifts, the ID dataset contains a feature that is correlated with a label, but this correlation is flipped OOD.
% For example, waterbirds is explicitly constructed so that ``water'' backgrounds are correlated with ``waterbird'' labels in the ID, but anti-correlated OOD.
\begin{enumerate}
	\item \textbf{Waterbirds}~\citep{sagawa2020group}: The goal is to classify an image as a ``waterbird'' or ``landbird''. The dataset is synthetically constructed to have adversarially spurious features: ``water'' backgrounds are correlated with ``waterbird'' labels in the ID, but anticorrelated OOD. We use checkpoints from~\citet{jones2021selective} where they use the group DRO intervention.
	\item \textbf{MNLI}~\citep{williams2018broad}: The goal is to predict whether a hypothesis is entailed, contradicted by, or neutral to an associated premise. We use the splits in~\citet{sagawa2020group}---they partition the dataset so that in-distribution ``negation'' words ``nobody'', ``no'', ``never'', and ``nothing'' are correlated with the contradiction label, however in the OOD dataset these words are anticorrelated with the contradiction label. We use checkpoints from~\citet{jones2021selective} where they use the group DRO intervention.
	\item \textbf{CivilComments}~\citep{borkan2019nuanced}: The goal is to predict whether a comment is toxic or not. We use the splits in~\citet{sagawa2020group}---they partition the dataset where in the ID split mentions of a Christian identity are correlated with non-toxic comments, but in the OOD split mentions of a Christian identity are correlated with a toxic comment. We use checkpoints from~\citet{jones2021selective} where they use the group DRO intervention. CivilComments is also used in~\citet{koh2021wilds}.
\end{enumerate} 

\subsection{Additional details on comparison with self-training / In-N-Out}
\label{sec:more-info-datasets-appendix-in-n-out}

We give additional details on our comparisons to self-training (In-N-Out~\citep{xie2021innout}), which we report in Table~\ref{tab:self_train_results} in Section~\ref{sec:experiments}.
Our standard model corresponds to the `aux-in' model in In-N-Out terminology---the aux-in model takes in the core input and the auxiliary input, and trains via standard empirical risk minimization.
Our robust model corresponds to  the `aux-out' model in In-N-Out terminology---the aux-out model does a pretraining step where it tries to predict the auxiliary input from the core input, on both ID and OOD (unlabeled) data. Note that the aux-out model does not feed in the auxiliary inputs into the model at all.

We compare with their In-N-Out method which uses self-training on ID data to mitigate ID-OOD tradeoffs. Note that~\citet{xie2021innout} also consider an additional self-training step on both ID and OOD unlabeled data, which they call In-N-Out + repeated ST. We do not compare with that since it uses additional OOD data in the method, which \calens{} do not.

We download checkpoints from the official CodaLab implementation: \url{https://worksheets.codalab.org/worksheets/0x2613c72d4f3f4fbb94e0a32c17ce5fb0}. For each dataset, we take the first 3 trials (checkpoints) of aux-inputs (standard model) and aux-outputs (robust model). We then compare the results of applying out method (\calens{}) with the results listed in the CodaLab worksheet for In-N-Out (but not In-N-Out + repeated self-training which also does self-training on OOD unlabeled data).

\subsection{Per-dataset results on ensembling ablations}
\label{sec:per-dataset-ensemble-ablations}

\textbf{Different ways of ensembling models.} In Section~\ref{sec:experiments-how-ensemble} we ablated calibrated ensembles with ``tuned'' ensembles where the ensemble weights are tuned on in-distribution validation data, and with vanilla ensembles.
The main takeaway was the calibrated ensembles outperform these other ways of ensembling the standard and robust models, when tested on out-of-distribution (OOD) test examples.
Here, we show per-dataset results both ID (Table~\ref{tab:id_tuned}) and OOD (Table~\ref{tab:ood_tuned}).

Now, we describe what each of the ensembling ablations are more formally, building on the notation in Section~\ref{sec:methods}.

\textbf{Logits.} Here, we add up the logits of the two models, which is equivalent to multiplying their probabilities. We do not apply a calibration step.
\begin{equation}
	\fens(x) = \fstd(x) + \frob(x).
\end{equation}
\textbf{Probs.} Here, we average the probabilities of the two models, without any calibration step. The purpose of the $\log$ below is to convert back to logit space.
\begin{equation}
	\fens(x) = \log\Big(\softmax(\fstd(x)) + \softmax(\frob(x))\Big).
\end{equation}
We add the model probabilities inside the $\log$, instead of averaging them---but this is equivalent to averaging the probabilities after taking the $\softmax$ because $\softmax$ normalizes it back into probabilities. Formally, with a bit of algebra, we have:
\begin{equation}
	\softmax(\fens(x)) = \frac{1}{2}\Big(\softmax(\fstd(x)) + \softmax(\frob(x))\Big).
\end{equation}

\textbf{Tuned Logits.} Here, we take a weighted average of the logits of the models. The weight $\alpha$ is selected from $\{0.0, 0.1, 0.2, \ldots, 1.0\}$, and is chosen to maximize accuracy on the \emph{in-distribution} validation set, which is the same data we calibrate on. The main point of this ablation is that other sensible ways of using the ID validation data besides calibration, do not do as well as \calens{}.
\begin{equation}
	\fens(x) = \alpha \fstd(x) + (1 - \alpha) \frob(x).
\end{equation}

\textbf{Tuned Probs.} Here, we take a weighted average of the probabilities of the two models, where the weight $\alpha$ is selected from $\{0.0, 0.1, 0.2, \ldots, 1.0\}$, and is chosen to maximize accuracy on the \emph{in-distribution} validation set.
\begin{equation}
	\fens(x) = \log\Big(\alpha \cdot \softmax(\fstd(x)) + (1 - \alpha) \cdot \softmax(\frob(x))\Big).
\end{equation}

\textbf{Calibrated Logits.} Here, we calibrate the models and then add up their logits, where $\Tstd$ and $\Trob$ are selected on ID validation data so that the standard and robust models' confidences and accuracies match up. (Equation~\ref{eqn:tstd_trob_eqn}, Section~\ref{sec:methods}).
\begin{equation}
	\fens(x) = \frac{\fstd(x)}{\Tstd} + \frac{\frob(x)}{\Trob}.
\end{equation}

\textbf{Calibrated Probs.} Here, we calibrate the models and then add up their probabilities. This is precisely what we do in Section~\ref{sec:methods} in the main paper, and what we report for all the other results besides the ablations.
\begin{align}
\fens(x) = \log\Big(\softmax\Big(\frac{\fstd(x)}{\Tstd}\Big) + \softmax\Big(\frac{\frob(x)}{\Trob}\Big)\Big).
\end{align}

\textbf{Ensembling two standard or two robust models.} In Section~\ref{sec:experiments-how-ensemble}, 
We also compared calibrated ensembles (of one standard and one robust model) with ensembles of two standard models, and ensembles of two robust models, where for a fair comparison all models are calibrated.
We ran this ablation on 6 of the \numtotal{} datasets (Entity-30, DomainNet, CIFAR$\to$STL, Living-17, Landcover, Cropland, and CelebA) because it requires multiple standard and multiple robust models, which were not available or very expensive to run on large datasets like ImageNet.
Calibrated ensembles get an average ID accuracy of \calaccidSeven{}\% (vs. \robrobaccidSeven{}\% for a robust-robust ensemble and \stdstdaccidSeven{}\% for a standard-standard ensemble), and an average OOD accuracy of \calaccoodSeven{}\% (vs. \robrobaccoodSeven{}\% for a robust-robust ensemble and \stdstdaccoodSeven{}\% for a standard-standard ensemble). We show per-dataset results in Table~\ref{tab:id_std_std_rob_rob} (ID) and Table~\ref{tab:ood_std_std_rob_rob} (OOD).
We show per-dataset results both ID (Table~\ref{tab:id_std_std_rob_rob}) and OOD (Table~\ref{tab:ood_std_std_rob_rob}).

\begin{table*}[t]
\caption{
\emph{ID} accuracies: The in-distribution accuracies of calibrated ensembles, tuned ensembles, and vanilla ensembles are very close (within confidence intervals), so any of these methods are acceptable if we are looking at in-distribution accuracy. However, they perform quite differently when it comes to OOD accuracy (Table~\ref{tab:ood_tuned}).
}
\label{tab:id_tuned}
\vskip 0.15in
\begin{center}

\begin{tabular}{cccccccc}
\toprule
 & Ent30 & DomNet & CIFAR10 & Liv17 & Land & Crop & CelebA\\
\midrule
Logits & 93.7 (0.1) & 89.3 (0.6) & \textbf{97.3 (0.1)} & \textbf{97.1 (0.2)} & \textbf{77.4 (0.1)} & 95.5 (0.1) & 93.4 (0.6)\\
Probs & \textbf{93.7 (0.1)} & 89.1 (0.4) & \textbf{97.3 (0.1)} & \textbf{97.1 (0.2)} & \textbf{77.4 (0.2)} & 95.5 (0.1) & 93.4 (0.6)\\
Tuned Logits & \textbf{93.8 (0.0)} & \textbf{91.3 (0.2)} & \textbf{97.4 (0.1)} & 97.1 (0.1) & \textbf{77.3 (0.4)} & \textbf{95.6 (0.1)} & \textbf{94.8 (0.2)}\\
Tuned Probs & 93.8 (0.1) & 90.6 (0.7) & \textbf{97.4 (0.1)} & \textbf{97.2 (0.1)} & 77.1 (0.3) & 95.5 (0.1) & \textbf{95.0 (0.2)}\\
Calibrated Logits & 93.7 (0.1) & \textbf{91.1 (0.4)} & 97.2 (0.1) & \textbf{97.2 (0.2)} & 77.2 (0.2) & \textbf{95.6 (0.1)} & \textbf{94.5 (0.5)}\\
Calibrated Probs & 93.7 (0.1) & \textbf{91.2 (0.7)} & 97.2 (0.1) & \textbf{97.2 (0.2)} & 77.2 (0.2) & \textbf{95.6 (0.1)} & \textbf{94.5 (0.5)}\\
\bottomrule
\end{tabular}
\vspace{1.2mm}
\newline
\begin{tabular}{cccccc}
\toprule
 & ImageNet & iWildCam & MNLI & Waterbirds & Comments\\
\midrule
Logits & 82.1 (-) & \textbf{84.2 (-)} & \textbf{82.9 (-)} & 90.1 (-) & 90.4 (-)\\
Probs & 82.1 (-) & 83.9 (-) & \textbf{82.9 (-)} & 90.1 (-) & 90.4 (-)\\
Tuned Logits & \textbf{82.7 (-)} & \textbf{84.1 (-)} & \textbf{83.0 (-)} & \textbf{93.2 (-)} & \textbf{92.7 (-)}\\
Tuned Probs & 82.3 (-) & 83.9 (-) & \textbf{83.0 (-)} & \textbf{93.2 (-)} & \textbf{92.6 (-)}\\
Calibrated Logits & 82.0 (-) & \textbf{84.3 (-)} & 82.8 (-) & 92.9 (-) & 91.4 (-)\\
Calibrated Probs & 82.0 (-) & 84.0 (-) & \textbf{82.8 (-)} & 92.9 (-) & 91.4 (-)\\
\bottomrule
\end{tabular}

% \begin{tabular}{ccccccc}
% \toprule
%  & Ent30 & DomNet & CIFAR10 & Land & Crop & ImNet\\
% \midrule
% Logits & 93.7 (0.1) & 89.3 (0.6) & 97.3 (0.1) & 77.4 (0.1) & 95.5 (0.1) & 80.9 (-)\\
% Probs & 93.7 (0.1) & 89.1 (0.4) & 97.3 (0.1) & 77.4 (0.2) & 95.5 (0.1) & 81.0 (-)\\
% Tuned Logits & 93.8 (0.0) & 91.3 (0.2) & 97.4 (0.1) & 77.3 (0.4) & 95.6 (0.1) & 81.7 (-)\\
% Tuned Probs & 93.8 (0.1) & 90.6 (0.7) & 97.4 (0.1) & 77.1 (0.3) & 95.5 (0.1) & 81.3 (-)\\
% Calibrated Logits & 93.7 (0.1) & 91.1 (0.4) & 97.2 (0.1) & 77.2 (0.2) & 95.6 (0.1) & 81.0 (-)\\
% Calibrated Probs & 93.7 (0.1) & 91.2 (0.7) & 97.2 (0.1) & 77.2 (0.2) & 95.6 (0.1) & 81.1 (-)\\
% \bottomrule
% \end{tabular}
\end{center}
\vskip -0.1in
\end{table*}

\begin{table*}[t]
\caption{
\emph{OOD} accuracies: calibrated ensembles outperform vanilla ensembles and even tuned ensembles where the combination weights are tuned to maximize in-distribution accuracy. Averaged across the datasets, calibrated ensembles get an OOD accuracy of \calaccood\%, while tuned ensembles get an accuracy of \tunedaccood\%. The in-distribution accuracies of the methods are very close (within 0.2\% of each other).
}
\label{tab:ood_tuned}
\vskip 0.15in
\begin{center}

\begin{tabular}{cccccccc}
\toprule
 & Ent30 & DomNet & CIFAR10 & Liv17 & Land & Crop & CelebA\\
\midrule
Logits & \textbf{64.9 (0.3)} & 75.7 (1.2) & 87.3 (0.2) & \textbf{81.8 (0.4)} & \textbf{60.5 (0.8)} & \textbf{90.9 (0.2)} & \textbf{76.9 (0.9)}\\
Probs & 64.6 (0.4) & 78.7 (1.3) & 87.2 (0.2) & \textbf{81.8 (0.4)} & 59.5 (1.0) & \textbf{90.9 (0.2)} & \textbf{76.9 (0.9)}\\
Tuned Logits & \textbf{64.6 (0.6)} & 86.3 (0.6) & 85.7 (0.9) & 80.8 (0.7) & 58.7 (1.2) & \textbf{87.3 (5.7)} & \textbf{77.5 (1.3)}\\
Tuned Probs & 62.8 (0.7) & \textbf{86.9 (0.2)} & 85.0 (1.3) & 81.6 (0.5) & 58.7 (2.2) & \textbf{86.8 (5.5)} & \textbf{77.6 (1.7)}\\
Calibrated Logits & \textbf{65.0 (0.4)} & 84.4 (0.3) & \textbf{87.5 (0.2)} & \textbf{82.0 (0.4)} & \textbf{61.2 (0.8)} & \textbf{91.3 (0.8)} & \textbf{77.6 (1.2)}\\
Calibrated Probs & \textbf{64.7 (0.5)} & 86.1 (0.2) & 87.3 (0.2) & \textbf{82.2 (0.6)} & \textbf{60.8 (0.8)} & \textbf{91.3 (0.8)} & \textbf{77.6 (1.2)}\\
\bottomrule
\end{tabular}
\vspace{1.2mm}
\newline
\begin{tabular}{cccccccc}
\toprule
 & ImNet-R & ImNet-V2 & ImNet-Sk & iWildCam & MNLI & Waterbirds & Comments\\
\midrule
Logits & 73.1 (-) & \textbf{73.7 (-)} & 52.1 (-) & \textbf{66.2 (-)} & 73.1 (-) & 66.9 (-) & \textbf{76.0 (-)}\\
Probs & 77.5 (-) & 73.4 (-) & 52.0 (-) & 65.3 (-) & 72.4 (-) & 66.9 (-) & \textbf{76.0 (-)}\\
Tuned Logits & 64.7 (-) & \textbf{73.6 (-)} & 47.9 (-) & 66.0 (-) & 68.0 (-) & \textbf{88.1 (-)} & 60.3 (-)\\
Tuned Probs & 64.0 (-) & 72.6 (-) & 45.5 (-) & 65.3 (-) & 69.4 (-) & \textbf{88.1 (-)} & 61.5 (-)\\
Calibrated Logits & 73.7 (-) & \textbf{73.6 (-)} & \textbf{52.3 (-)} & \textbf{66.1 (-)} & \textbf{73.6 (-)} & 81.1 (-) & 71.8 (-)\\
Calibrated Probs & \textbf{77.9 (-)} & 73.2 (-) & \textbf{52.3 (-)} & \textbf{66.3 (-)} & 73.2 (-) & 81.1 (-) & 71.8 (-)\\
\bottomrule
\end{tabular}

% \begin{tabular}{ccccccc}
% \toprule
%  & Ent30 & DomNet & STL & Land & Crop & ImNet-R\\
% \midrule
% Vanilla Logits & \textbf{64.9 (0.3)} & 75.7 (1.2) & \textbf{87.3 (0.2)} & \textbf{60.5 (0.8)} & \textbf{90.9 (0.2)} & 72.2 (-)\\
% Vanilla Probs & \textbf{64.6 (0.4)} & 78.7 (1.3) & 87.2 (0.2) & 59.5 (1.0) & \textbf{90.9 (0.2)} & \textbf{77.4 (-)}\\
% Tuned Logits & \textbf{64.6 (0.6)} & 86.3 (0.6) & 85.7 (0.9) & 58.7 (1.2) & \textbf{87.3 (5.7)} & 63.1 (-)\\
% Tuned Probs & 62.8 (0.7) & \textbf{86.9 (0.2)} & 85.0 (1.3) & 58.7 (2.2) & \textbf{86.8 (5.5)} & 63.8 (-)\\
% Calibrated Logits & \textbf{65.0 (0.4)} & 84.4 (0.3) & \textbf{87.5 (0.2)} & \textbf{61.2 (0.8)} & \textbf{91.3 (0.8)} & 71.7 (-)\\
% Calibrated Probs & \textbf{64.7 (0.5)} & 86.1 (0.2) & \textbf{87.3 (0.2)} & \textbf{60.8 (0.8)} & \textbf{91.3 (0.8)} & \textbf{77.1 (-)}\\
% \bottomrule
% \end{tabular}
\end{center}
\vskip -0.1in
\end{table*}

%%%%%%%%%%%%%% Std-std, rob-rob, vs. calibrated ensemble (ID) %%%%%%%%%%%%%%%%%%%
\begin{table*}[t]
\caption{
\emph{ID} accuracies: Calibrated ensembles (one standard and one robust model) achieve comparable or better performance to Standard ensembles (ensemble of two calibrated standard models) and Robust ensembles (ensemble of two calibrated robust models).
}
\label{tab:id_std_std_rob_rob}
\vskip 0.15in
\begin{center}

\begin{tabular}{cccccccc}
\toprule
 & Ent30 & DomNet & CIFAR10 & Liv17 & Land & CelebA\\
\midrule
Std Ensemble & \textbf{94.0 (0.0)} & 86.3 (0.4) & \textbf{97.7 (0.1)} & \textbf{97.0 (0.3)} & \textbf{77.9 (0.1)} & 91.7 (0.4)\\
Rob Ensemble & 90.9 (0.2) & 89.3 (0.3) & 92.0 (0.0) & \textbf{97.1 (0.1)} & 73.4 (0.2) & \textbf{95.2 (0.4)}\\
Cal ensemble & 93.7 (0.1) & \textbf{91.2 (0.7)} & 97.2 (0.1) & \textbf{97.2 (0.2)} & 77.2 (0.2) & 94.5 (0.5)\\
\bottomrule
\end{tabular}

\end{center}
\vskip -0.1in
\end{table*}

%%%%%%%%%%%%%% Std-std, rob-rob, vs. calibrated ensemble (OOD) %%%%%%%%%%%%%%%%%%%
\begin{table*}[t]
\caption{
\emph{OOD} accuracies: Calibrated ensembles (one standard and one robust model) achieve comparable or better performance to Standard ensembles (ensemble of two calibrated standard models) and Robust ensembles (ensemble of two calibrated robust models).
}
\label{tab:ood_std_std_rob_rob}
\vskip 0.15in
\begin{center}

\begin{tabular}{cccccccc}
\toprule
 & Ent30 & DomNet & CIFAR10 & Liv17 & Land & CelebA\\
\midrule
Std Ensemble & 61.7 (0.2) & 57.9 (0.2) & 83.5 (0.2) & 78.6 (0.4) & 57.5 (0.7) & 73.7 (1.1)\\
Rob Ensemble & 63.8 (0.4) & \textbf{87.5 (0.1)} & 85.1 (0.1) & \textbf{82.4 (0.1)} & \textbf{60.5 (1.4)} & \textbf{78.0 (0.6)}\\
Cal ensemble & \textbf{64.7 (0.5)} & 86.1 (0.2) & \textbf{87.3 (0.2)} & \textbf{82.2 (0.6)} & \textbf{60.8 (0.8)} & \textbf{77.6 (1.2)}\\
\bottomrule
\end{tabular}

\end{center}
\vskip -0.1in
\end{table*}

\subsection{Per-dataset results on calibration and confidence}
\label{sec:per-dataset-calibration-appendix}

\textbf{Relative confidence can be incorrect.}
We measure the confidence of a model $f$ on a distribution $P$ as $\mbox{conf}(f, P) = \E_{x \sim P}[\max_i f(x)_i]$.
Even if the models are not calibrated OOD, one intuitive intuition for why calibrated ensembles work is that that robust model has higher confidence OOD, so that the ensemble primarily uses the (more accurate) robust model's predictions OOD.
\ak{maybe simplify. even if the robust model is more confident on average that doesn't mean we get best of both worlds. so maybe just directly say: on OOD data, the robust model is typically more confident than the standard model, which is reasonable since the robust model is also more accurate.}
However, on the remote sensing dataset Landcover we find that the robust model is 6\% \emph{less confident} on OOD data than the standard model even though the robust model is 5\% \emph{more accurate} OOD than the standard model.
Interestingly, calibrated ensembles are able to combine the models in a more fine-grained way to get the best of both worlds, which is captured in our stylized setting in Section~\ref{sec:analysis}.
We show the average confidence of the standard and robust models for each dataset ID (Table~\ref{tab:id_conf}) and OOD (Table~\ref{tab:ood_conf}).

\textbf{Per-dataset results for ECE.}
In Section~\ref{sec:experiments-how-ensemble}, we talked about the ECE of the standard and robust models \emph{after calibrating on ID data}.
Here we show the results for each dataset ID (Table~\ref{tab:id_ece}) and OOD (Table~\ref{tab:ood_ece}).
We also show the ECE of the standard and robust models \emph{before calibrating on ID data}, on ID (Table~\ref{tab:id_ece_precalibration}) and on OOD (Table~\ref{tab:ood_ece_precalibration}).

%%%%%%%%%%%%%% ECE, post-calibration (ID) %%%%%%%%%%%%%%%%%%%
\begin{table*}[t]
\caption{
\emph{ID} ECE: The expected calibration error (ECE) of the standard and robust models on ID test data, after post-calibration in ID validation data.
The ID calibration errors are low---note that we only use 500 examples to temperature scale, so for ImageNet we have fewer examples than classes for post-calibration, but the models are still fairly well calibrated.
}
\label{tab:id_ece}
\vskip 0.15in
\begin{center}
\begin{tabular}{cccccccc}
\toprule
 & Ent30 & DomNet & CIFAR10 & Liv17 & Land & Crop & CelebA\\
\midrule
Cal. Standard & 0.7 (0.1) & 2.0 (0.3) & 0.8 (0.2) & 1.3 (0.2) & 1.1 (0.5) & 1.4 (0.3) & 2.7 (0.4)\\
Cal. Robust & 1.1 (0.4) & 2.2 (0.2) & 1.3 (0.2) & 1.8 (0.0) & 1.7 (0.3) & 3.5 (0.2) & 1.2 (0.3)\\
\bottomrule
\end{tabular}
\vspace{1.2mm}
\newline
\begin{tabular}{cccccc}
\toprule
 & ImageNet & iWildCam & MNLI & Waterbirds & Comments\\
\midrule
Cal. Standard & 1.2 (-) & 3.6 (-) & 2.2 (-) & 1.2 (-) & 1.2 (-)\\
Cal. Robust & 2.3 (-) & 1.3 (-) & 2.5 (-) & 0.5 (-) & 8.1 (-)\\
\bottomrule
\end{tabular}
\end{center}
\vskip -0.1in
\end{table*}

%%%%%%%%%%%%%% ECE, post-calibration (OOD) %%%%%%%%%%%%%%%%%%%
\begin{table*}[t]
\caption{
\emph{OOD} ECE: The expected calibration error (ECE) of the standard and robust models on OOD test data, after calibrating on ID validation data.
The calibration errors here are high, especially compared to the ID calibration errors in Table~\ref{tab:id_ece}.
}
\label{tab:ood_ece}
\vskip 0.15in
\begin{center}
\begin{tabular}{cccccccc}
\toprule
 & Ent30 & DomNet & CIFAR10 & Liv17 & Land & Crop & CelebA\\
\midrule
Cal. Standard & 15.4 (0.8) & 13.6 (1.5) & 5.6 (1.1) & 11.4 (0.3) & 16.4 (0.8) & 7.4 (4.8) & 11.5 (1.0)\\
Cal. Robust & 14.3 (1.5) & 5.5 (0.5) & 8.2 (0.0) & 8.7 (0.2) & 6.5 (1.1) & 5.0 (0.3) & 14.0 (1.4)\\
\bottomrule
\end{tabular}
\vspace{1.2mm}
\newline
\begin{tabular}{cccccccc}
\toprule
 & ImNet-R & ImNet-V2 & ImNet-Sk & iWildCam & MNLI & Waterbirds & Comments\\
\midrule
Cal. Standard & 5.4 (-) & 4.0 (-) & 10.1 (-) & 3.2 (-) & 13.2 (-) & 17.7 (-) & 23.3 (-)\\
Cal. Robust & 4.0 (-) & 4.9 (-) & 5.1 (-) & 2.4 (-) & 4.2 (-) & 5.5 (-) & 6.3 (-)\\
\bottomrule
\end{tabular}

\end{center}
\vskip -0.1in
\end{table*}

%%%%%%%%%%%%%%% These should certainly go into the Appendix. %%%%%%%%%%%%%

%%%%%%%%%%%%%% Confidence on ID data (after calibration) %%%%%%%%%%%%%%%%%%%
\begin{table*}[t]
\caption{
\emph{ID} Confidences: The confidence of the standard and robust models on ID test data (after calibrating on ID data).
The standard model is typically more confidence than the robust model, which is reasonable since the standard model is also typically more accurate.
There are a few exceptions such as DomainNet, CelebA, and WaterBirds where the standard model is less confident than the robust model, but the standard model is also less accurate in these cases, so this is also reasonable.
}
\label{tab:id_conf}
\vskip 0.15in
\begin{center}
\begin{tabular}{cccccccc}
\toprule
 & Ent30 & DomNet & CIFAR10 & Liv17 & Land & Crop & CelebA\\
\midrule
Cal. Standard & 93.1 (0.3) & 83.7 (0.4) & 96.9 (0.6) & 97.0 (0.2) & 76.5 (0.9) & 95.5 (0.4) & 91.7 (0.6)\\
Cal. Robust & 89.9 (0.4) & 89.6 (0.1) & 91.0 (0.1) & 96.0 (0.1) & 71.3 (0.5) & 94.9 (0.5) & 94.7 (0.2)\\
\bottomrule
\end{tabular}
\vspace{1.2mm}
\newline
\begin{tabular}{cccccc}
\toprule
 & ImageNet & iWildCam & MNLI & Waterbirds & Comments\\
\midrule
Cal. Standard & 82.1 (-) & 82.1 (-) & 82.6 (-) & 87.9 (-) & 93.6 (-)\\
Cal. Robust & 68.1 (-) & 82.3 (-) & 81.9 (-) & 93.2 (-) & 87.0 (-)\\
\bottomrule
\end{tabular}
\end{center}
\vskip -0.1in
\end{table*}

%%%%%%%%%%%%%% Confidence on OOD data (after calibration) %%%%%%%%%%%%%%%%%%%
\begin{table*}[t]
\caption{
\emph{OOD} Confidences. The confidence of the standard and robust models on OOD test data (after calibrating on ID data).
The robust model is usually more confident than the standard model, which is reasonable since the robust model is also typically more accurate.
However, Landcover is a noticable exception: the robust model is less confident OOD, even though it is more accurate (see Table~\ref{tab:ood_results}).
}
\label{tab:ood_conf}
\vskip 0.15in
\begin{center}
\begin{tabular}{cccccccc}
\toprule
 & Ent30 & DomNet & CIFAR10 & Liv17 & Land & Crop & CelebA\\
\midrule
Cal. Standard & 76.1 (0.8) & 68.9 (1.5) & 87.8 (1.2) & 89.2 (0.5) & 72.0 (1.9) & 92.8 (1.0) & 85.5 (1.5)\\
Cal. Robust & 77.5 (0.4) & 92.6 (0.4) & 93.3 (0.1) & 90.8 (0.2) & 66.0 (0.6) & 94.1 (0.4) & 90.1 (0.1)\\
\bottomrule
\end{tabular}
\vspace{1.2mm}
\newline
\begin{tabular}{cccccccc}
\toprule
 & ImNet-R & ImNet-V2 & ImNet-Sk & iWildCam & MNLI & Waterbirds & Comments\\
\midrule
Cal. Standard & 57.8 (-) & 75.5 (-) & 50.6 (-) & 59.1 (-) & 77.0 (-) & 78.1 (-) & 80.1 (-)\\
Cal. Robust & 74.0 (-) & 64.2 (-) & 53.2 (-) & 65.1 (-) & 79.7 (-) & 92.5 (-) & 80.4 (-)\\
\bottomrule
\end{tabular}
\end{center}
\vskip -0.1in
\end{table*}

%%%%%%%%%%%%%% ECE, before calibration (ID) %%%%%%%%%%%%%%%%%%%
%%%%%%%%%%%%%% NOTE: this is before calibration %%%%%%%%%%%%%%%%%%%
\begin{table*}[t]
\caption{
\emph{ID} ECE. The expected calibration error (ECE) of the standard and robust models on ID test data, \emph{before calibration} (the key difference from Table~\ref{tab:id_ece} is that this is before calibration).
We can see that calibration on ID substantially reduces the ECE on ID data (see Table~\ref{tab:id_ece})
}
\label{tab:id_ece_precalibration}
\vskip 0.15in
\begin{center}
\begin{tabular}{cccccccc}
\toprule
 & Ent30 & DomNet & CIFAR10 & Liv17 & Land & Crop & CelebA\\
\midrule
Standard & 1.0 (0.1) & 8.5 (0.7) & 1.2 (0.1) & 1.2 (0.1) & 6.7 (1.2) & 1.5 (0.3) & 5.9 (0.5)\\
Robust & 1.1 (0.3) & 5.8 (1.3) & 1.1 (0.2) & 3.4 (0.4) & 1.3 (0.1) & 3.5 (0.1) & 1.8 (0.2)\\
\bottomrule
\end{tabular}
\vspace{1.2mm}
\newline
\begin{tabular}{cccccc}
\toprule
 & ImageNet & iWildCam & MNLI & Waterbirds & Comments\\
\midrule
Standard & 2.2 (-) & 10.9 (-) & 9.0 (-) & 8.2 (-) & 3.7 (-)\\
Robust & 2.4 (-) & 2.8 (-) & 8.2 (-) & 14.8 (-) & 10.2 (-)\\
\bottomrule
\end{tabular}
\end{center}
\vskip -0.1in
\end{table*}

%%%%%%%%%%%%%% ECE, before calibration (OOD) %%%%%%%%%%%%%%%%%%%
%%%%%%%%%%%%%% NOTE: this is before calibration %%%%%%%%%%%%%%%%%%%
\begin{table*}[t]
\caption{
\emph{OOD} ECE: The expected calibration error (ECE) of the standard and robust models on OOD test data, \emph{before calibration} (the key difference from Table~\ref{tab:ood_ece} is that this is before calibration).
The calibration errors here are higher than the ID calibration errors in Table~\ref{tab:id_ece_precalibration}.
Comparing with Table~\ref{tab:ood_ece} (which is after calibration on ID data), we see that calibrating ID does help OOD calibration a little, although the models still remain miscalibrated OOD.
}
\label{tab:ood_ece_precalibration}
\vskip 0.15in
\begin{center}
\begin{tabular}{cccccccc}
\toprule
 & Ent30 & DomNet & CIFAR10 & Liv17 & Land & Crop & CelebA\\
\midrule
Standard & 19.1 (0.3) & 29.5 (0.5) & 10.1 (0.3) & 11.7 (0.4) & 24.7 (1.5) & 8.3 (4.3) & 17.6 (0.5)\\
Robust & 14.3 (1.6) & 1.8 (0.8) & 8.4 (0.3) & 6.8 (0.2) & 7.1 (1.3) & 8.4 (0.7) & 12.7 (0.7)\\
\bottomrule
\end{tabular}
\vspace{1.2mm}
\newline
\begin{tabular}{cccccccc}
\toprule
 & ImNet-R & ImNet-V2 & ImNet-Sk & iWildCam & MNLI & Waterbirds & Comments\\
\midrule
Standard & 7.9 (-) & 6.1 (-) & 13.3 (-) & 19.5 (-) & 22.7 (-) & 31.8 (-) & 30.0 (-)\\
Robust & 3.9 (-) & 5.2 (-) & 5.2 (-) & 5.3 (-) & 10.3 (-) & 10.4 (-) & 9.9 (-)\\
\bottomrule
\end{tabular}
\end{center}
\vskip -0.1in
\end{table*}

\end{document}